\newcolumntype{P}[1]{>{\centering\arraybackslash}p{#1}}
\definecolor{mydarkblue}{rgb}{0,0.08,0.60}
\newtheorem{theorem}{Theorem}
\newtheorem{lemma}{Lemma}
\newtheorem{corollary}{Corollary}
\newtheorem{definition}{Definition}
\newtheorem{remark}{Remark}
\newtheorem{conjecture}{Conjecture}
\title{How Far Can Transformers Reason?\\
The Globality Barrier and Inductive Scratchpad}
\author{%
Emmanuel Abbe$^{1,2}$, Samy Bengio$^1$, Aryo Lotfi$^2$, Colin Sandon$^2$, Omid Saremi$^1$\\
$^1$Apple \quad $^2$EPFL 
}
\begin{document}

\maketitle

\begin{abstract}
Can Transformers predict new syllogisms by composing established ones? 
More generally, what type of targets can be learned by such models from scratch? Recent works show that Transformers can be Turing-complete in terms of expressivity, but this does not address the learnability objective. This paper puts forward the notion of {\it globality degree} of a target distribution to capture when weak learning is efficiently achievable by regular Transformers. 
This measure shows a contrast with the expressivity results of Transformers captured by $TC^0/TC^1$ classes (further studied here), since the globality relates to correlations with the more limited $NC^0$ class. 
We show here experimentally and theoretically under additional assumptions that distributions with high globality cannot be learned efficiently. In particular, syllogisms cannot be composed on long chains.  
Further, we develop scratchpad techniques and show that: (i) agnostic scratchpads cannot break the globality barrier, (ii) educated scratchpads can break the globality with intermediate steps, although not all such scratchpads can generalize out-of-distribution (OOD), (iii) a notion of `inductive scratchpad', that composes the prior information more efficiently, can both break the globality barrier and improve the OOD generalization. In particular, some of our inductive scratchpads can achieve length generalizations of up to $6\times$ for some arithmetic tasks depending on the input formatting.  
\end{abstract}

\vspace{-2mm}
\section{Introduction}
Transformers \cite{vaswani2017attention-transformer} have proved to have strong learning capabilities, in particular in applications with large amounts of text, image, or audio data \cite{dosovitskiy2020image, alabdulmohsin2022revisiting}. 
Some reasoning capabilities are also notable in these settings, however, the picture deteriorates when the target complexity increases, such as in tasks involving more advanced forms of `reasoning' 
\cite{saxton2019analysing, lewkowycz2022solving-minerva,Zhang2021PointerVR,johnson2017clevr,bakhtin2019phyre,velivckovic2022clrs,mahdavi2022better}. While reasoning is present at all levels of learning, it is pushed to a higher level in tasks such as logic or mathematics, where `learning by seeing enough representative examples' is precluded by the more combinatorial nature of the task. 
For such tasks, combining learned concepts in order to extrapolate seems necessary, as for the length generalization problem \cite{anil2022exploring-length}. 
Current Transformer-based models exhibit difficulties learning at scale on such tasks.
Can we understand why and what is missing? We start with a specific motivational example before expanding the discussion to more general tasks. 

\subsection{Syllogisms composition}
Reasoning relates to the process of inferring new knowledge by composing efficiently some prior knowledge. A basic notion of reasoning is syllogism composition, e.g., inferring $a \Rightarrow c$ from $a \Rightarrow b$ and $b \Rightarrow c$. For instance, one may be given a set of implications:
{\footnotesize	
\vspace{-.4cm}
\begin{center}
\begin{tabular}{P{0.47\textwidth}|P{0.47\textwidth}}
 task 1 has priority over task 2 &   $x>2 \quad \Rightarrow \quad x^2 > 3$ \\
 task 1 has priority over task 3   &  $x > 2 \quad \Rightarrow \quad (x-1)(x+1) > 1$ \\
task 4 has priority over task 1  &  $4^x >17 \quad \Rightarrow \quad x > 2$  \\
 task 1 has priority over task 5   &  $x > 2 \quad \Rightarrow \quad  x-x^4 > 1-2x^4$ 
\end{tabular}
\end{center}
}
\vspace{-.1cm}
and without additional background information, one would like to know using logic whether
\vspace{-.2cm}
{\footnotesize
\begin{center}
\vspace{-.3cm}
\begin{tabular}{P{0.47\textwidth}|P{0.47\textwidth}}
task 3 has priority over task 5?   &  $4^x >17 \quad \stackrel{?}{\Rightarrow} \quad x^2> 3$.
\end{tabular}
\end{center}
}
The goal here is to identify whether a syllogism can be composed\footnote{Answering `yes/1' if the syllogism can be obtained by composing input ones or `cannot tell/0' otherwise. }  by prior ones. Simplifying the input format, the above correspond to identifying paths in token sequences describing the directed edges of an underlying graph, i.e.,  
whether there is a directed path $3 \to 5$ (case 1) or $4 \to 2$ (case 2) using the directed edges $\{(1 \to 2), (1 \to 3), (4 \to 1), (1 \to 5)\}$. 

This type of task is nontrivial for current LLMs, and we refer to Appendix \ref{app:gpt} for experiments with GPT models.\footnote{At the time of the experiments, ChatGPT was in particular not successful at these two tasks.} Note that here we are not interested specifically in solving a graph-based task, but rather in understanding when Transformers can compose and more generally how far they can do so. 
We would like to identify particular measures on the data distribution (e.g., syllogisms topologies in the above example) that capture when Transformers can efficiently learn.

\subsection{Hardness of long compositions}\label{hardness}
Consider the previous syllogism composition task where implications are drawn on a graph with 24 edges drawn randomly over 24 vertices. Picking vertices at distances 1 to 4 for the connected case and picking disconnected vertices uniformly at random lets a Transformer achieve a test accuracy of more than 80\% after about 2000 iterations.  
However, does this mean that the model has learned to compose syllogisms, or has it found shortcuts, e.g., based on node degrees, to guess the implications often enough? In Appendix~\ref{sec:ER}, we provide empirical evidence supporting the latter. Motivated by this issue and to preclude spurious correlations, we consider the following distribution. 

\begin{definition}[Cycle task]\label{def:cycle-task}
For $n \ge 1$, consider the binary classification task with equiprobable classes defined by
\vspace{-2mm}
\begin{enumerate}
\item Class 1: a graph uniformly drawn on $2n$ vertices with two disjoint cycles of length $n$ and a pair of vertices in disjoint cycles queried for path;
\vspace{-2mm}
\item Class 2: a graph  uniformly drawn on $2n$ vertices with one cycle of length $2n$ and a pair of vertices at distance $n$  queried for path.
\end{enumerate}
\vspace{-2mm}
The input of this task is the graph edge set with the queried vertices.  
The label is 0 if the two queried vertices are not connected (Class 1) and 1 if they are (Class 2). 
See Figure~\ref{fig:2cycles} for an illustration.

\end{definition}

Figure \ref{fig:no-cot-fails} shows that the learning complexity increases `exponentially' as $n$ grows using  GPT2-style Transformers of more than $10M, 25M, 85M$ parameters; e.g., the $10M$ model fails to learn for $n \geq 7$ in $100k$ iterations. Why is that? 
Can a larger scale further help here?
     \begin{center}
{\it Can a large (poly-size) Transformer learn the cycle task when $n$ gets large? If not, why so?}
\end{center}

\begin{figure}[t]
\centering
\begin{subfigure}[b]{.5\textwidth}
  \centering
  
\includegraphics[width=1.0\textwidth]{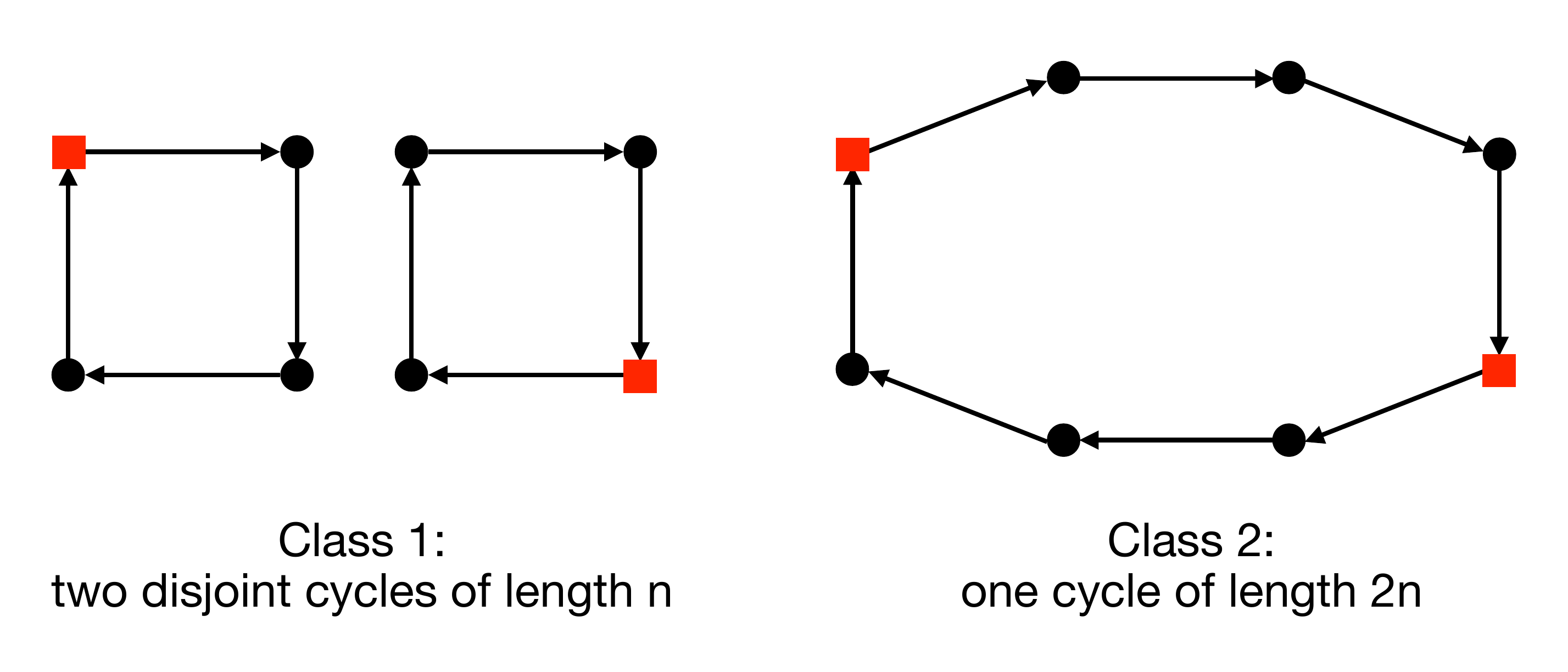}\caption{Cycle task: binary classification to predict whether two vertices (the red squares) are connected or not on the above two graph topologies.}
    \label{fig:2cycles}
  
\end{subfigure}%
\hfill
\begin{subfigure}[b]{.47\textwidth}
  \centering

\includegraphics[width=0.9\textwidth]{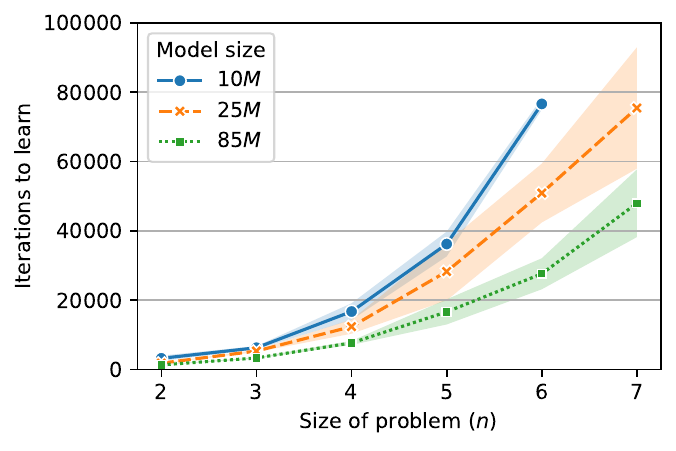}\caption{Nb.\ of iterations with $512$ fresh samples to learn ($\ge 95\%$ acc.) the cycle task of size $n$ using  GPT2-style models with $10M, 25M, 85M$ parameters. 
    }
    \label{fig:no-cot-fails}
  
\end{subfigure}
\caption{ Illustration of the cycle task for $n=4$ (left) and the complexity to learn it (right).}
\label{fig:cycle-task-illust-complexity}
\end{figure}


A challenge for the cycle task is that there is no clear `low-complexity pattern' in the input representation that indicates whether there are 1 or 2 cycles. No simple statistics based on degrees, edge counts, or finite motif counts that can tell if the vertices are connected or not. One has to consider at least $n$ edges in order to get any correlation with the presence of a path. In other words, the task requires a `global reasoning' involving a `large' number of input tokens and this seems hard for Transformers.


\subsection{Hardness of global reasoning}
As discussed, the cycle task appears to be challenging for Transformers as it requires some global reasoning. Other tasks such as subset parities exhibit the same challenge. However the latter can be proved to be not efficiently learnable by various regular neural networks and noisy gradient descent, as one can get explicitly a class of functions (through orbit arguments \cite{abbe2020poly,non-univ}) that has large statistical dimension \cite{kearns} or low cross-predictability \cite{abbe2020poly,AS20} (see Appendix \ref{lower_bound_ref}). For the cycle task, we have a single distribution, and it is unclear how to use the invariances of Transformers to get arguments as in \cite{abbe2020poly,non-univ}, as the input distribution is not invariant under the symmetries of the model. We thus would like to develop a more general complexity measure that unifies why such tasks are hard for Transformer-like models and that formalizes the notion of `global reasoning barrier' when models are trained from scratch. We also would like to understand how the scratchpad methodologies that have proved helpful in various settings (see Section \ref{sec:scratchpad}) can help here. This raises the questions: 
\begin{center}
{\it (1) How can we formalize the `global reasoning barrier' in general terms?}\\
{\it (2) Can we break the `global reasoning barrier' with scratchpad methodologies?}
\end{center}

\subsection{Our contributions}
We provide the following contributions:
\begin{itemize}[leftmargin=*]
  \item We introduce the notion of {\it globality degree} in Definition \ref{def:globality} to capture when weak learning is efficiently achievable by Transformers. 
    The globality degree measures the least number of tokens required in addition to the token histogram to correlate nontrivially with the target; it is also related in Lemma \ref{NC0_lemma} to correlations with $NC^0$ circuits, showing the contrast between learnability and expressivity controlled by $TC^0/TC^1$ with constant/logarithmic depth \cite{parallelism}. It is an explicit measure that applies to a data distribution without requiring an orbit argument to infer a class of distributions  \cite{abbe2020poly,non-univ}, giving a tight proxy for models like Transformers. 
We provide the following results based on the globality degree: 
    \begin{itemize}[leftmargin=*]
    \item A general conjecture (Conjecture \ref{gen_conj}), backed by experimental results, that claims efficient weak learning is achievable by a regular Transformer if and only if the globality degree is constant.  
    \item Theorem \ref{3cycleTheorem} that proves the negative side of the above conjecture, the {\it globality barrier}, in an instance of the cycle task under certain technical assumptions. (The cycle task is also put forward in the paper as a simple benchmark to test the global reasoning capabilities of models.) 
    \end{itemize}

    \item We then switch to the use of `scratchpads' to help with the globality barrier:
  \begin{itemize}[leftmargin=*]
    \item Agnostic scratchpad: we extend Theorem \ref{3cycleTheorem} to cases where a poly-size scratchpad is used by the Transformers, without any direct supervision of the scratchpad (i.e., the scratchpad mainly provides additional memory/compute). This shows that efficient weak learning is still not possible with such an agnostic scratchpad if the globality is non-constant. An educated guess about what to learn in the scratchpad based on some target knowledge is thus required.  
    \item Educated scratchpad: we generalize the measure of globality to the  `autoregressive globality' to quantify when an educated scratchpad is able to break the globality of a task with subtasks of lower globality. We give experimental results showing that educated scratchpads with constant autoregressive globality allow Transformers to efficiently learn tasks that may originally have high globality. This gives a way to measure how useful a scratchpad can be to break a target into easier sub-targets.     
    \item We introduce the notion of {\it inductive scratchpad}, a type of educated scratchpad that exploits `induction' compared to a fully educated scratchpad and thus composes more efficiently the prior state information. We show that when the target admits an inductive decomposition, such as for the cycle, arithmetic, or parity tasks, the inductive scratchpad both breaks the globality and improves the OOD generalization in contrast to fully educated scratchpads. This gives significant length generalization on additions (from 10 to 18 or from 4 to 26 depending on the method) and parities (from 30 to 50-55). For instance, using different methods, \cite{bubeck} can length generalize from 10 to 12 digits for additions, and  \cite{anil2022exploring-length} can get roughly 10 extra bits for parities. 
\end{itemize}
\end{itemize}

\section{Results on the global reasoning barrier}
{\bf Prior literature.} Much work in the literature has been devoted to complexity measures for the sample/time complexity of learning. The largest portion is devoted to target classes in PAC settings, e.g., with the VC dimension measures \cite{shaishai}, and some to statistical query (SQ) settings with the statistical dimension measures \cite{kearns,feldman_gen}. Here, we are however interested in measures that are relevant to (1) regular Transformers trained by (S)GD, and (2) data distribution fixed by a task. Some recent works have studied complexity measures for (S)GD-trained neural networks. Various settings and measures have been used, such as the noise sensitivity \cite{o'donnell_2014,Zhang2021PointerVR, hahn2024sensitive}, the cross-predictability \cite{abbe2020poly,AS20},  the NTK alignment~\cite{jacot2018neural,cortes}, the INAL \cite{AbbeINAL}, the $G$-alignment \cite{non-univ}, the information and generative exponents \cite{arous2021online,bruna1,bruna2} and the leap \cite{abbe2023leap}; we refer to Appendix \ref{lower_bound_ref} for discussions on these.  

However, despite this significant body of work, finding a simple measure giving a tight proxy for Transformer weak learning (i.e., the first non-trivial learning requirement) on a given data distribution, remains unsettled. We next propose such a measure.

\subsection{Defining globality and auto-regressive globality}
We define now the notion of globality degree, which in turn will quantify the notion of globality (or global reasoning) barrier. 
\begin{definition}\label{def:globality}{\bf(Globality degree)} 
For (a sequence of) distributions $D$ on $\mathcal{A}^n \times \mathcal{A}$, where $\mathcal{A}$ is a finite alphabet set of poly$(n)$-cardinality, define the globality degree of $D$, $\mathrm{glob}(D)$, as the smallest number of variables $k \in [n]$ for which there exists $S$, $|S|=k$ such that $$I(X[S],\hat{P}_X;Y)=n^{-O(1)}$$ where $(X,Y) \sim D$ and $\hat{P}_X$ is the empirical measure of $X$ (i.e., the histogram of tokens in $X$). 
\end{definition}

\begin{remark}
The globality degree, or simply globality, of a distribution measures the least number of input tokens to attend to in order to correlate non-trivially with the label when also given the histogram of tokens. The specific choice of the mutual information is not crucial, but one must use a proper measure of dependency (i.e., not just linear correlations), and the mutual information can have convenient chain rule properties. The definition can be related to correlations with $NC^0$ circuits (besides for the histogram requirement, see Lemma~\ref{NC0_lemma}) and also to low-degree polynomial testing, except that it is more general than the latter as it applies to arbitrary token space (without requiring polynomial definitions). Finally, we require the globality to achieve an inverse-polynomial mutual information, the weakest form relevant to weak learning with an inverse-polynomial edge, but one may naturally define the stronger notion with a mutual information of $\Omega(1)$.  
\end{remark}

We now define the globality in the autoregressive setting. 
\begin{definition}\label{def:seq-globality}{\bf (Globality degree in autoregressive setting)}
For $D$ on $\mathcal{A}^n \times \mathcal{A}^m$,  define $\mathrm{glob}(D)$ as the smallest integer $k$ for which there exist sets of indices $S_1,\ldots,S_m$, $|S_t| \le k$ for all $t \in [m]$, such that $$I((X,Y_{<t})[S_t],\hat{P}_{X,Y_{<t}};Y_t)=n^{-O(1)},$$ where $(X,Y) \sim D$ and $\hat{P}_{X,Y_{<t}}$ is the empirical measure of $(X,Y_{<t})$.
\end{definition}
In the auto-regressive setting, the globality is mostly relevant when weak learning gives strong learning, in order to let the scratchpad learn each step. 

As we will see in the next section, the globality degree is put forward as a tight proxy to understand efficient weak learning of regular Transformers for arbitrary data distributions. We first present the operational advantages of the definition, going back to the running example of the cycle task.

\paragraph{Attributes of $\mathrm{glob}$ and some examples.}
The globality has the attributes of being (i) a fairly explicit measure, (ii) applicable to any data distribution on tokens without having to infer a distribution class from the model invariances to estimate the distribution complexity, (iii) not limited to i.i.d.\ inputs but any input distribution, (iv) relevant to current models of interest such as Transformers.   

In particular, back to the cycle task, we have that any set of $n-1$ edges have the same distribution in Class 1 or 2, therefore the globality is at least $n$: 

\begin{lemma}\label{glob-cycle}
We have $\mathrm{glob}(\text{Cycle task}(n))\ge n$. 
\end{lemma}
As discussed in the next section, this explains why the cycle task is hard to learn. 
In contrast, the example at the beginning of Section \ref{hardness} has a much lower globality, as being connected correlates to query nodes having large enough degrees, and thus it can be expected for the model to learn with non-trivial accuracy (e.g., by using degree shortcuts).

\subsection{Transformers require low globality: formal results}
\vspace{-.1cm}
\begin{definition}
A neural network of input dimension $n$ (i.e., a directed acyclic graph with $n$ inputs) and depth $d$ (i.e., the longest path length from input to output) is
\begin{enumerate}
\vspace{-.2cm}
\item T-regular if it is a Transformer (e.g., \cite{vaswani2017attention-transformer, gpt2}) of polynomial size with Normal Gaussian i.i.d.\ positional embeddings, Normal Gaussian i.i.d.\ weights, and bidirectional attention.
\vspace{-.2cm}
\item T-regular with $s$-scratchpad if we have a constant sized token alphabet $\Sigma$ and a T-regular Transformer that takes an $m$-sequence in $\Sigma$ and outputs a probability mass function on $\Sigma$ (with well-behaved\footnote{I.e. there exists $c$ such that for any two vectors $u$ and $u'$ the total variation distance between the softmax's probability distributions on these vectors is at most $c||u-u'||_2$. Note that this does still allow a softmax which returns a specific value with probability $1$ on some vectors.} softmax). To compute the value of this net and scratchpad on an input $X\in \Sigma^n$ we set $X_n=X$ and then for each $m \in \{n,\ldots,n+s-1\}$ draw $x_{m+1}$ from the probability distribution represented by the net's value on $X_{m}$ and set $X_{m+1}=X_m\circ x_{m+1}$ (concatenation). Then, we consider $x_{n+s}$ to be the overall output of the net.\footnote{Normally when using a scratchpad we would use causal masking to make it so that we do not need to recalculate previous entries of lists in the Transformer whenever a new entry is added to the scratchpad. We use this one for simplicity, but we still expect all conjectures involving a scratchpad to hold when we apply causal masking to the scratchpad, and Theorem \ref{agnostic3cycleTheorem} and its proof are unchanged.}
\end{enumerate}
\end{definition}

\begin{remark}
In this paper, we focus on learnability via descent algorithms, but for the simpler question of expressivity, one wants to know whether there is any choice of parameters for which a Transformer can compute a target function
(with limits to how precisely it can record values).\\
(i) In \cite{parallelism},  the expressivity of Transformers with constant alphabet size and values recorded to inverse-polynomial accuracy is investigated. It is shown that such a Transformer of constant depth was limited to computing functions in $TC^0$. Conversely, it also showed that for any $TC^0$ function, there is a constant-depth Transformer and instruction string such that when the Transformer is given the instruction string and $x$ as its input it computes the function on $x$. The same technique would extend to show that with logarithmic depth, one can reach $TC^1$ (which includes connectivity tasks). \\
(ii) In \cite{expressivity}, well-behaved\footnote{They assumed a constant alphabet size, inverse-polynomial accuracy, constant depth, causal masking, averaging hard attention, and a projected pre-norm. We suspect that all of these requirements other than the constant alphabet size and constant depth could be dropped without changing the results we state here.} Transformers with a scratchpad are considered. It is shown that a Transformer with a scratchpad of logarithmic length is limited to computing functions in logspace. We tighten this to $TC^0$ in Lemma \ref{logspace} in Appendix~\ref{app:circuit-complexity-connections}.  
On the other hand, \cite{expressivity} also shows that any function in $P$ is computable by a Transformer with a scratchpad of polynomial length.\\
(iii) If we allow Transformers of poly depth then we can convert any poly-sized circuit to a Transformer by replacing each gate in the circuit with an attention head that attends to the values of the appropriate input tokens or attention heads and performs the appropriate computation on them. That means any function in P (including the cycle task) is computable by a Transformer of poly depth and size. 
\end{remark}

We now state the general conjecture putting forward the globality barrier for learning. 
\makeatletter
\begin{conjecture}\label{gen_conj}
A distribution $P_{X,Y}$ with well-behaved\footnote{Conditions such as assuming that there is no value of $X$ that is frequent enough that the model weakly learns the function simply by memorizing the value of that input. Most distributions of interest are well-behaved (including the cycle task). See Definition \ref{def:well-behaved} in Appendix \ref{app:conj-specification} for a formal definition and additional discussion.}  
$P_X$ is efficiently weakly\footnotemark
\global\let\saved@Href@A\Hy@footnote@currentHref
learnable by a T-regular Transformer if and only if\footnotemark \global\let\saved@Href@B\Hy@footnote@currentHref $P_{X,Y}$ has constant globality. 
\addtocounter{footnote}{-1}
\end{conjecture}
\let\Hy@footnote@currentHref\saved@Href@A
\footnotetext{I.e., with an accuracy that improves on the trivial accuracy by at least $n^{-c}$ for a constant $c$.}
\addtocounter{footnote}{1}
\let\Hy@footnote@currentHref\saved@Href@B
\footnotetext{More specifically, for the converse statement (learning due to constant globality) assume that $|\mathcal{A}|$ is constant.}
\makeatother
\begin{remark}
(1) An essential property of the model for the above conjecture is that the probability distribution of the function computed by the model is invariant under permutations of the inputs, and if it is trained reasonably on samples drawn from a distribution drawn from a class that is symmetric under permutations of the inputs, its distribution will retain its symmetry under permutations of the inputs. 
For MLPs, we expect most of the results in this paper to apply, with the modification of the globality not having access to the empirical measure of $X$, since one has additional symmetry obtained by exchanging tokens.  
(2) If we were to use causal masking or a specific choice of positional embeddings that would make it easier for the Transformer to focus on specific relevant subsets of the inputs, 
one could potentially learn functions with higher globality. For instance, we would expect to be unable to learn a function that computes the parity of some subset of $\log(n)$ input bits. However, if we had a positional embedding that gave one value for all of the active bits and mapped other bits to $0$, then the Transformer would probably be able to learn the function in question. Likewise, causal masking makes it so that the first few elements of any list the Transformer computes depend only on the first few tokens, which makes it easier to learn functions that would rely on those symbols. 
\end{remark}

We prove the negative side of Conjecture \ref{gen_conj} for a variant of the cycle task.
\begin{theorem}\label{3cycleTheorem}
Let $G$ be a directed graph which consists of a cycle of length $3n$ with probability $2/3$ and~3 cycles of length $n$ otherwise. Next, if there are $3$ cycles pick one vertex from each and if there is one cycle pick 3 vertices that are each $n$ edges apart. Then, label these vertices with $a\_0$, $b\_0$, $c\_0$ uniformly at random. Next, number every other vertex by the distance from one of these three to it, and for each $i$, label uniformly at random the vertices at distance $i$ by $a\_i$, $b\_i$, and $c\_i$. Finally, store the edges between $a\_{i-1},b\_{i-1},c\_{i-1}$ and $a\_i, b\_i,c\_i$ in $X$ (as described in Figure \ref{fig:3cycles}), and let $Y$ report whether $a\_0, b\_0,c\_0$ are in the same cycle or not. Then if we train a T-regular neural network on $(X,Y)$ generated in this manner using population\footnote{This would also be true for batch GD with batches of size $n^c$, $c$ dependent on the other hyperparameters.} gradient descent with polynomial hyperparameters\footnote{I.e., either polynomial learning rate, polynomial clipping \cite{abbe2020poly,abbe2021power}, and weights stored using a logarithmic number of bits of precision and random rounding: for $a<b<c$ if $b$ needs to be rounded to $a$ or $c$ then it rounds to $c$ with probability $(b-a)/(c-a)$, or with polynomial learning rate, polynomial clipping and polynomial noise added to the gradients.} and a differentiable loss function then the network fails to weakly learn.
\end{theorem}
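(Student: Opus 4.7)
The plan is to combine an information-theoretic locality bound on the task with the distributional exchangeability of a T-regular Transformer, and then feed these into a signal-to-noise argument on the GD trajectory. To set up the locality bound, view $X$ as an ordered list of $n$ inter-level matchings $\pi_1,\ldots,\pi_n \in S_3$, where $\pi_i$ records how $\{a_{i-1},b_{i-1},c_{i-1}\}$ connects to $\{a_i,b_i,c_i\}$. Because of the uniform random relabeling at each level, each individual $\pi_i$ is uniform on $S_3$ and any collection of $n-1$ of them is jointly uniform on $S_3^{n-1}$. The only global constraint is the value of $\pi_n\cdots\pi_1$: it equals the identity in the $3$-cycle case ($Y=0$) and a non-trivial $3$-cycle in the $1$-cycle case ($Y=1$). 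Since the token histogram $\hat{P}_X$ is also identically distributed under the two classes, one gets $I(X[S],\hat{P}_X;Y)=0$ for every $|S|<n$, and therefore $\mathrm{Loc}(D)\ge n$, strengthening Lemma~\ref{loc-cycle} for this $3$-cycle variant.

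Next, the i.i.d.\ Gaussian positional embeddings and i.i.d.\ Gaussian weights in the definition of T-regular give a distributional exchangeability: for every input-position permutation $\sigma$ and random initialization $\theta_0$, we have $f_{\theta_0}(X)\stackrel{d}{=}f_{\theta_0}(\sigma X)$. This symmetry is preserved by population gradient descent once the data is first averaged over all permutations of the input positions, which we may do without loss of generality since training on $D$ is equivalent in law to training on the symmetrized distribution. On the symmetrized distribution, the only functionals of $X$ that the trained network can possibly produce (in distribution, at any step) are permutation-symmetric, so they factor through the histogram $\hat{P}_X$ together with a ``few attended tokens'' summary $X[S]$---which is exactly what the locality measure was designed to capture. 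The task, as seen by a T-regular Transformer, thus reduces to correlating $Y$ with $(X[S],\hat{P}_X)$ for $|S|$ small, which by Step~1 is impossible beyond $n^{-\omega(1)}$ correlation whenever $|S|<n$.

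The last step is the standard noisy/clipped GD argument. At initialization, the population gradient of the loss with respect to any parameter direction can be expanded along permutation-symmetric test functions; the locality lower bound forces each such inner product with $Y$ to be $n^{-\omega(1)}$, while stochastic-gradient variance and parameter norms stay $n^{O(1)}$ under the polynomial-clipping, polynomial-learning-rate, and polynomial-noise (or $O(\log n)$-bit random rounding) assumptions. One then inductively couples the true GD trajectory with a ``null'' trajectory in which the label is replaced by an independent coin, and bounds the total-variation distance between the two by $n^{-\omega(1)}\cdot T$ after $T=\mathrm{poly}(n)$ steps, following the framework of \cite{abbe2020poly,abbe2021power,AS20}. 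The hardest part of the proof is precisely this quantitative coupling: one must verify at every step that the signal term remains super-polynomially small relative to the noise coming from the random embeddings and weights, and that the clipping/rounding/noise suppresses any exponential amplification that the non-linear Transformer dynamics could otherwise enable. Once the coupling is established, no inverse-polynomial edge over chance can arise, and the Transformer fails to weakly learn.
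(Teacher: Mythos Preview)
Your proposal has the right high-level shape (use the permutation symmetry of T-regular Transformers, couple the GD trajectory to a null trajectory with random labels), but Step~2 contains a genuine gap that makes the argument break down.

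The problematic claim is: ``the only functionals of $X$ that the trained network can possibly produce (in distribution, at any step) are permutation-symmetric, so they factor through the histogram $\hat{P}_X$ together with a `few attended tokens' summary $X[S]$.'' Neither half of this is justified. First, distributional invariance of $\theta_0$ under input permutations does not mean that $f_{\theta_t}$ for a \emph{fixed} realization is a permutation-symmetric function of $X$; only the law over $\theta_0$ is symmetric. The gradient test functions $g(X)=\partial_{w_i} L(\cdot, f_{\theta_t}(X))$ that you must control are therefore not symmetric in $X$ and need not be low-locality. Second, even a genuinely permutation-symmetric function does not ``factor through $\hat P_X$ plus a small $X[S]$''; it factors through the full histogram only, and nothing in the locality definition lets you replace a symmetric function by a low-locality one. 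So your Step~1 locality bound $\mathrm{Loc}(D)\ge n$, while correct, never connects to the gradient signal. Indeed, the implication ``high locality $\Rightarrow$ GD fails'' is exactly Conjecture~\ref{gen_conj} in the paper, which is not proved in general---the theorem is established by a different route.

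What the paper does instead is an orbit/cross-predictability computation that bypasses locality entirely. It fixes a concrete subgroup of block permutations $\pi_z=\prod_{i:z_i=1}(2i-1,2i)$, defines the orbit $\{Y_z\}$ of the label, and proves by a direct calculation in $S_3$ that $\mathbb{E}_{z}[\mathbb{E}_X^2[f(X)f_z(X)]]\le 8\cdot 2^{-n/3}$ (using that commutators $\sigma\sigma'\sigma^{-1}\sigma'^{-1}$ are uniform on $A_3$ whenever one of $\sigma,\sigma'$ is odd). A Cauchy--Schwarz step then gives $\mathbb{E}_z[\mathbb{E}_X^2[g(X)Y_z(X)]]\le 3\cdot 2^{-n/6}$ for \emph{every} bounded $g$, with no locality or symmetry assumption on $g$. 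Combined with the fact that training on $Y$ is equidistributed with training on $Y_z$ (this is where the T-regular symmetry is used), this yields the exponentially small gradient-signal bound you need for the coupling in Step~3. The missing ingredient in your proposal is precisely this quantitative orbit correlation bound; high locality alone does not supply it.
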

The proof of Theorem \ref{3cycleTheorem} is presented in  Appendix \ref{app:proof-thm1}.

\begin{figure}
\centering
\includegraphics[width=.45\textwidth]{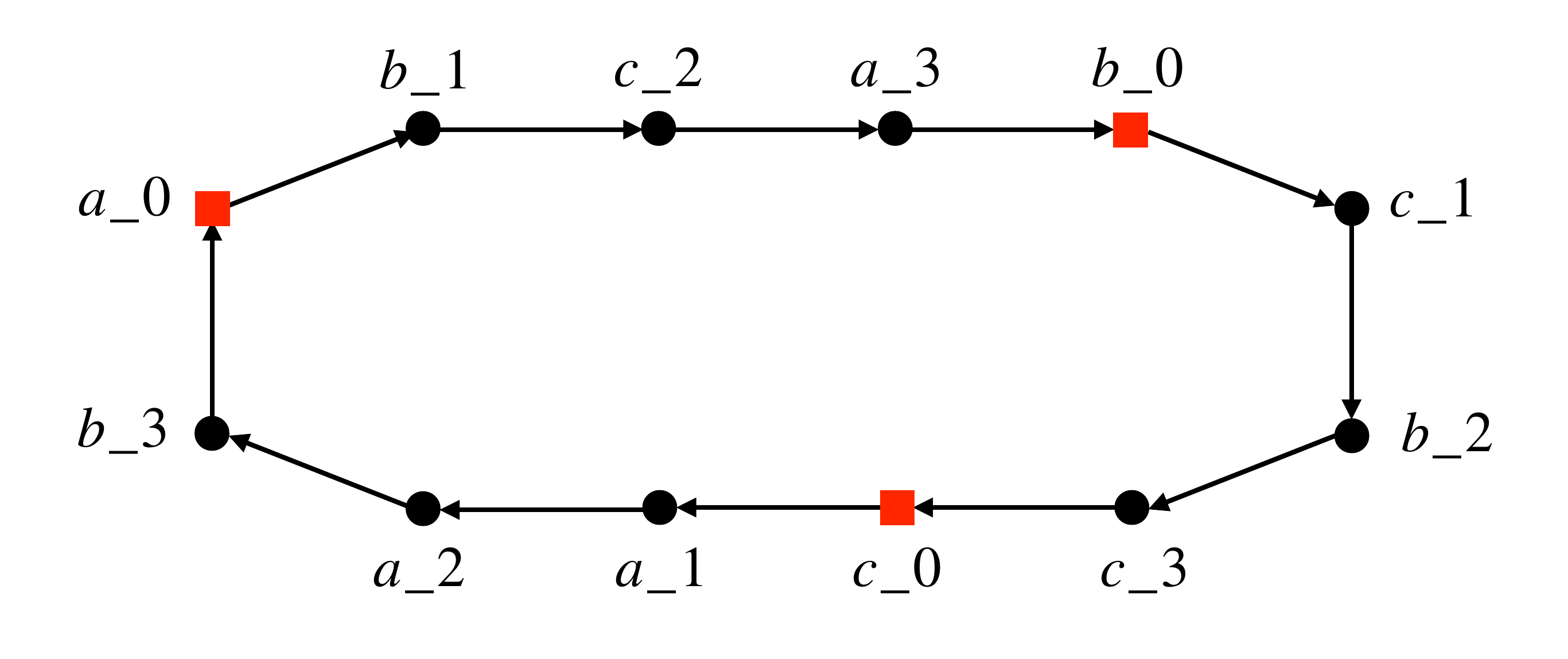}
    \caption{The cycle task variant used in Theorem \ref{3cycleTheorem}: the above example is stored as {\scriptsize \texttt{a\_0>b\_1;b\_0>c\_1;c\_0>a\_1;a\_1>a\_2;b\_1>c\_2;c\_1>b\_2;a\_2>b\_3;b\_2>c\_3;c\_2>a\_3;a\_3>b\_0;b\_3>a\_0;c\_3>c\_0;a\_0?b\_0?c\_0}}}
    \label{fig:3cycles}
\end{figure}

\subsection{Agnostic scratchpads cannot break the globality}
Next, we put forward a conjecture that agnostic scratchpads (scratchpads without direct supervision on the scratchpad tokens) cannot break the globality barrier. See Appendix \ref{app:intuition-agnostic-scratchpad} for further discussion. 

\begin{conjecture}\label{conjeture-agnostic}
Consider training a $T$-regular net with an $s$-scratchpad to learn $P_{X,Y}$ on a constant-sized alphabet by means of the following SGD algorithm. At each timestep, we draw a random sample $(X,Y)$ and compute a value for the net with scratchpad on $X$. Let $\mathcal{S}$ be the resulting scratchpad and for each $i\le s$ and each $\sigma \ne \mathcal{S}_i$, define an alternative scratchpad by setting the first $i-1$ entries of this scratchpad equal to those of $\mathcal{S}$, setting its $i$-th entry to $\sigma$, and using the net to compute the rest of its values. Then, regard the loss associated with setting the $i$th entry of the scratchpad to $\sigma$ as the loss resulting from the associated scratchpad, and use the resulting gradient to carry out one step of SGD. Then if $P_X$ is a well-behaved probability distribution, $P_{X,Y}$ is efficiently weakly learnable by a T-regular neural network with a scratchpad if and only if $P_{X,Y}$ has constant globality. 
\end{conjecture}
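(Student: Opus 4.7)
The easy direction --- constant locality implies efficient weak learning --- follows immediately from Conjecture \ref{gen_conj}: a T-regular Transformer without any scratchpad already weakly learns in that regime, and appending an unused scratchpad can only preserve this capability. So the task reduces to the forward direction: show that under the described perturbation-based SGD dynamics, a T-regular net with an $s$-scratchpad for any $s=\mathrm{poly}(n)$ cannot weakly learn a well-behaved $P_{X,Y}$ with super-constant locality. I would mirror the template underlying Theorem \ref{3cycleTheorem} (and Theorem \ref{agnostic3cycleTheorem}), extending it from the specific cycle distribution to general $P_{X,Y}$.

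\textbf{Main invariant.} The core object I would maintain, inductively over the $T=\mathrm{poly}(n)$ SGD iterations, is the following locality invariant: after $t$ steps, the joint distribution $Q_t(\cdot\mid X)$ of the full scratchpad-plus-output produced by the net on input $X$ can be approximated, in total variation up to $n^{-\omega(1)}$, by a distribution $\widetilde Q_t(\cdot\mid X[S_t],\hat P_X)$ that depends on $X$ only through the histogram and through a subset $S_t$ of token positions with $|S_t|=O(t)$. At initialization, the Gaussian weights and Gaussian positional embeddings, together with the permutation-exchangeability of T-regular attention, make $Q_0(\cdot\mid X)$ depend on $X$ essentially only through $\hat P_X$, so the invariant holds trivially with $|S_0|=0$. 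Well-behavedness of $P_X$ is used to rule out degenerate modes in which a particular input value carries noticeable mass.

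\textbf{Inductive step.} For the update, one needs to bound how the perturbation-based gradient step --- evaluating losses on all $|\Sigma|$ alternative settings of each scratchpad cell and backpropagating --- can enlarge $S_t$. Because the scratchpad is \emph{agnostic}, no direct label information is injected into scratchpad tokens; each perturbation simply asks the net to recompute cells $i+1,\ldots,s$ from a different starting point, and by the inductive hypothesis those recomputations are again $n^{-\omega(1)}$-close to a distribution determined by $X[S_{t-1}]$ and $\hat P_X$. Hence the gradient direction at step $t$ is, up to $n^{-\omega(1)}$ error, a function of $X[S_{t-1}\cup S']$ and $\hat P_X$, where $S'$ is a constant-size set of new token positions brought in by the fresh attention heads touched during this pass. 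Setting $S_t:=S_{t-1}\cup S'$ preserves the invariant with $|S_t|\le|S_{t-1}|+O(1)$. Because $\mathrm{Loc}(P_{X,Y})$ is super-constant while $|S_t|=O(T)$, any function of $X[S_t]$ and $\hat P_X$ has mutual information $n^{-\omega(1)}$ with $Y$ for every $t\le T$ as soon as $T=o(\mathrm{Loc}(P_{X,Y}))$. The gradient therefore carries no usable label signal and the net's predictive distribution remains $n^{-\omega(1)}$-close to chance throughout training, precluding weak learning.

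\textbf{Main obstacle.} The delicate step is the inductive bound on $S'$: each of the $s$ scratchpad cells gives rise to $|\Sigma|-1$ alternative rollouts, and if any one of those rollouts were sensitive to a new polynomial-size set of input tokens, the locality could blow up far faster than $O(1)$ per outer iteration. The key lemma to prove --- the genuine analog of the argument that would be needed for Theorem \ref{agnostic3cycleTheorem} --- is that for a T-regular net whose current forward map is locally determined, the softmax-sampled alternative completions are also locally determined, with the same token set $S_{t-1}$ up to a constant-size additive increment. This will likely require an orbit/anticoncentration argument for the Gaussian features (showing that a random T-regular attention head cannot correlate with an unseen token outside $S_{t-1}$ beyond $n^{-\omega(1)}$) together with a coupling between the true and perturbed scratchpad rollouts in which the sampling randomness in the softmax decorrelates the alternative trajectories from the identity of tokens not already in $S_{t-1}$. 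Handling this cleanly, uniformly over the polynomially many SGD steps and exponentially many perturbation trajectories, is where the bulk of the technical work sits.
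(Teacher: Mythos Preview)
This statement is a \emph{conjecture} in the paper; it is not proved there, and no proof is claimed. The paper's only formal support is the special instance of Theorem~\ref{agnostic3cycleTheorem} (the 3-cycle task), whose proof is essentially a rerun of Theorem~\ref{3cycleTheorem}: one exhibits an explicit group of input permutations preserving $P_X$, shows that pairwise correlations within the orbit of the label function are exponentially small, and concludes that the population gradient (with the scratchpad marginalised by taking the expectation over all rollouts) is within $\mathrm{poly}(n)\cdot 2^{-\Theta(n)}$ of the gradient under independent random labels. That argument is tied to having a concrete symmetry group for the specific input distribution; the paper does not attempt to extend it to arbitrary well-behaved $P_X$ with super-constant locality.

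Your proposal does not actually follow this template, despite saying so. You propose instead to maintain a ``locality invariant'': a set $S_t$ of size $O(t)$ such that the trained net's output depends on $X$ only through $(X[S_t],\hat P_X)$. This is a genuinely different mechanism, and the inductive step has a real gap. A single (S)GD update on a T-regular Transformer backpropagates through attention layers that aggregate over \emph{all} positions; each coordinate of the gradient is a global function of $X$, and nothing in T-regularity forces a step to reveal only $O(1)$ new positions. Permutation symmetry of the random initialisation does give you $|S_0|=0$, but symmetry alone yields no per-step growth bound once the label enters the gradient. The paper's orbit argument avoids this issue entirely: it never tracks which positions the network attends to, but shows directly that the \emph{label signal} in the gradient is negligible because no function of $X$ correlates with a random member of the orbit. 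Your ``main obstacle'' paragraph correctly flags the missing lemma, but as stated that lemma is not merely delicate --- there is no evident reason it should hold for general T-regular architectures. For the easy direction, your reduction to Conjecture~\ref{gen_conj} is legitimate as a reduction but leaves both directions resting on open conjectures, which is exactly the status the paper assigns to this statement.
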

A natural counterpart of Theorem \ref{3cycleTheorem} holds for the previous conjecture (see Theorem \ref{agnostic3cycleTheorem}). In order to define the Transformer's loss on any given input it takes the expectation over every possible value of the scratchpad it might generate, and its proof is essentially identical to that of Theorem \ref{3cycleTheorem}.

\section{Scratchpads to break the globality}\label{sec:scratchpad}
{\bf Prior literature.} It has been shown that training Transformers with the intermediate steps required to solve a problem can enhance learning. This idea is usually referred to as providing models with a scratchpad \cite{nye2021work}. 
The improved performance due to scratchpads has been reported on a variety of tasks including mathematical tasks and programming state evaluation \cite{nye2021work,anil2022exploring-length,bubeck}. See Appendix \ref{app:related-lit} for further references.

\subsection{Educated scratchpad}
We now provide a quantitative understanding of how the scratchpad can help with the notion of globality in the autoregressive setting (Definition \ref{def:seq-globality}). Assume that we want to learn target $Y\in \mathcal{A}$ from input $X \in \mathcal{A}^n$ such that $(X, Y) \sim D$ and $\mathrm{glob}(D)$ is high. If one can design intermediate targets $Y_1, \ldots, Y_k \in \mathcal{A}$ such that $Y_k = Y$ and the sequence $(X,Y_{<i}) \to Y_i$ has low globality according to Definition \ref{def:seq-globality}, then one can expect to learn each step of the sequence efficiently and thus the target at the end. In this case, the intermediate targets give the `educated scratchpad' (see Figure \ref{fig:scratchpad-globality-illustration} for an illustration).
We now show how designing low-globality scratchpads can help with learning by focusing on two examples: parity functions and the cycle task. 

\begin{figure}[t]
\centering
\includegraphics[width=.5\textwidth]{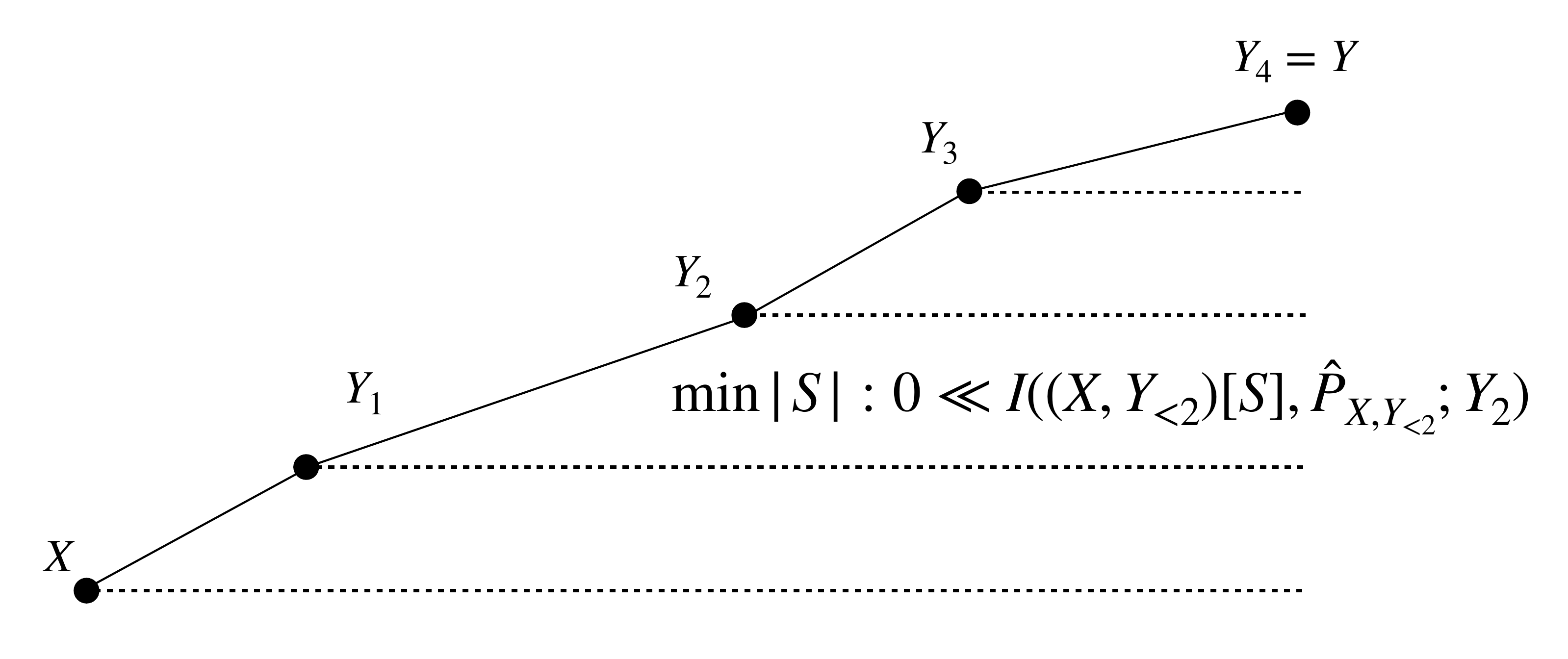}
    \caption{An illustration showing how scratchpads can break the globality. The target may be efficiently learned if each scratchpad step is of low globality given the previous ones.}
    \label{fig:scratchpad-globality-illustration}
\end{figure}

\paragraph{Results for learning parities.}\label{sec:sp-parity}
Consider learning parity function $y= f(x_1, \ldots, x_n) = x_1x_2\cdots x_k$ where $x_1, \ldots, x_n$ are drawn i.i.d.\ in $\{ \pm 1\}$ with uniform distribution.  For $k \leq \frac{n}{2}$, one can easily check that the globality of this task is $k$ as any $k-1$ coordinates are independent of the output and the histogram of the tokens does not help. Parity functions are known to be hard to learn \cite{abbe2020poly}. More specifically, it has been previously shown that as $k$ increases the parity task becomes harder to learn to the point that parity of degree $\min\{k, n-k\}=\omega(1)$ cannot be learned in $poly(n)$ time with standard poly-size neural networks under standard training assumptions \cite{abbe2020poly}. Note that this is consistent with our results, as the globality is non-constant.

Now, consider learning this task with a scratchpad that breaks down the learning with intermediate targets $y_1, y_2, \ldots, y_k$ such that 
\begin{equation*}
    y_1=x_1, \qquad y_2=x_1x_2,  \qquad \ldots \quad  y_i = y_{i-1}x_i, \quad \ldots \quad y_k =x_1x_2\cdots  x_k =f(x), 
\end{equation*} 
i.e.,  $y_i$ is the cumulative product of the first $i$ bits. Note that each intermediate target $y_i$ can be computed by using at most 2 of the previous tokens, implying the following lemma.
\begin{lemma}
The parity task with the cumulative product scratchpad has a globality of $2$.
\end{lemma}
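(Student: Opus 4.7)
The plan is to verify Definition \ref{def:seq-locality} directly by exhibiting, for each step $t\in[k]$ of the cumulative product scratchpad, an index set $S_t$ of size at most $2$ whose selected tokens determine $Y_t$ and therefore achieve constant mutual information with it. The distribution $D$ here is supported on $\{\pm 1\}^n\times\{\pm 1\}^k$, with $Y_i$ equal to the running product of the first $i$ input bits.

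For the base case $t=1$ I would take $S_1=\{X_1\}$, and since $Y_1=X_1$ we get $I(X_1,\hat{P}_X;Y_1)\ge H(Y_1)=\log 2$. For $t\ge 2$, the recursion $Y_t=Y_{t-1}\cdot X_t$ in $\{\pm 1\}$ forces the natural choice $S_t=\{X_t,Y_{t-1}\}$: two positions of the prefix $(X,Y_{<t})$ pointing to $X_t$ and to the previously generated scratchpad token $Y_{t-1}$. Since $Y_t$ is a deterministic function of the pair $((X,Y_{<t})[S_t])$,
$$I\bigl((X,Y_{<t})[S_t],\hat{P}_{X,Y_{<t}};Y_t\bigr)\;\ge\;H(Y_t)\;=\;\log 2,$$
which is $\Omega(1)$ and hence $n^{-O(1)}$. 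Because every $|S_t|\le 2$, Definition \ref{def:seq-locality} delivers $\mathrm{Loc}(D)\le 2$, which is the content of the lemma.

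There is essentially no technical obstacle: once the scratchpad recursion is spelled out, the proof is a one-line observation that each scratchpad token depends deterministically on the previous scratchpad token together with one fresh input bit. The only mild care is that the size bound $2$ must hold uniformly across all steps, which requires treating the base case $t=1$ separately (or padding $S_1$ with an arbitrary additional coordinate). The conceptual punchline, that the locality drops from $k$ for the bare parity task to $2$ for the scratchpad-augmented task, is then an immediate corollary of the identity $Y_t=Y_{t-1}X_t$, which is the point the lemma is used to make in the broader narrative on educated scratchpads.
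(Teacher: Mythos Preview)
Your proposal is correct and follows essentially the same approach as the paper: the paper's entire argument is the one-line observation preceding the lemma, ``each intermediate target $y_i$ can be computed by using at most 2 of the previous tokens,'' and you have simply unpacked this into a careful verification of Definition~\ref{def:seq-locality} with the explicit choices $S_1=\{X_1\}$ and $S_t=\{X_t,Y_{t-1}\}$ for $t\ge 2$.
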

Transformers with such a scratchpad can in fact easily learn parity targets, see Appendix \ref{sec:parities-scratchpad}.

\paragraph{Results for the cycle task.} Consider the cycle task and a scratchpad that learns the depth-first search (DFS) algorithm from the source query node.\footnote{For the particular graph structure of the cycle task, DFS is the same as the breadth-first search (BFS).} For example, consider the following input corresponding to two cycles $a,x,q$ and $n,y,t$:
\texttt{a>x; n>y; q>a; t>n; y>t; x>q; a?t;}.
In this case, doing a DFS from node \texttt{a} gives \texttt{a>x>q>a} where the fact that we have returned to the source node \texttt{a} and not seen the destination \texttt{t} indicates that the two nodes are not connected. Therefore, the full scratchpad with the final answer can be designed as \texttt{a>x>q>a;0}. Similarly, if the two nodes were connected the scratchpad would be \texttt{a>...>t;1}. One can easily check that the cycle task becomes low-globality with the DFS scratchpad. 

\begin{lemma}
The cycle task with the DFS scratchpad has a globality of $3$.
\end{lemma}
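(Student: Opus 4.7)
The plan is to show that at every scratchpad position $t$ one can exhibit a set $S_t$ of at most three indices in $(X, Y_{<t})$ so that $I((X, Y_{<t})[S_t], \hat{P}_{X, Y_{<t}}; Y_t) = n^{-O(1)}$, matching Definition \ref{def:seq-locality}. I would partition the DFS-scratchpad tokens into three types and handle each in turn. The delimiters ("\texttt{>}" and "\texttt{;}") and the initial reproduction of the query vertex are determined by one or two preceding scratchpad tokens, so the locality contribution there is at most $2$ and $I = \Omega(1)$. The terminating answer bit is determined by comparing the last scratchpad vertex against the two query tokens in $X$, giving $|S_t| \le 3$ with $I = \Omega(1)$.

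The substantive case is the "next vertex" prediction. Let $v$ be the most recent vertex label in $Y_{<t}$; since every vertex has out-degree one in the cycle task, $Y_t$ is the unique $u$ with $v \to u \in X$. I would take $S_t$ to consist of the position of $v$ in the scratchpad together with the source and destination slots of a fixed edge in $X$, say the first edge listed. Because $X$ lists the $2n$ edges in a uniformly random order, the event $E := \{\text{the first edge equals } v \to u\}$ has probability $p = \Theta(1/n)$. On $E$ the destination slot reveals $Y_t$ outright, so $H(Y_t \mid S_t, \hat{P}, E) = 0$; on $\neg E$, the symmetry of the random vertex labeling (conditional on the histogram) makes $Y_t$ essentially uniform over the remaining unvisited labels, with only $O(1/n)$ entropy reduction from learning one additional edge of the graph. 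Averaging yields $I \ge p \cdot H(Y_t \mid \hat{P}) - O(1/n) = \Theta(\log(n)/n) = n^{-O(1)}$, which is the required bound.

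The main obstacle I anticipate is making precise the claim that, conditional on $\hat{P}_{X, Y_{<t}}$ and on $\neg E$, the observed edge $s \to d$ is nearly independent of $Y_t$. The structural fact to lean on is that, since each vertex has in-degree and out-degree one, the only edge with destination $u = Y_t$ is $v \to u$ itself, so $Y_t \ne d$ is the entire constraint imposed by observing $s \to d$ with $s \ne v$. Combined with the exchangeability of the vertex labels not yet pinned down by the scratchpad, this makes the conditional distribution of $Y_t$ uniform on $\{$remaining labels$\} \setminus \{d\}$, and the entropy bookkeeping used above becomes a one-line calculation. The upper bound of $3$ then follows, which is what the lemma asserts.
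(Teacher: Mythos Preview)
Your proposal is correct and follows essentially the same approach as the paper's proof: the paper's argument is the one-line observation that ``one only needs to find the next node in the DFS path (besides the label), which one can check with polynomial chance by checking the first edge,'' and your treatment of the next-vertex case (taking $S_t$ to be the previous scratchpad vertex together with the source and destination slots of a fixed edge, and noting the $\Theta(1/n)$ hit probability) is exactly a fleshed-out version of that sketch. Your additional case analysis for the delimiters, the initial vertex, and the terminal label only makes the bound more explicit and does not depart from the paper's reasoning.
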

This follows from the fact that one only needs to find the next node in the DFS path (besides the label), which one can check with polynomial chance by checking the first edge. 

In Figure \ref{fig:cycles-scratchpad} we show that a decoder-only Transformer with the DFS scratchpad in fact learns the cycle task when $n$ scales.



\begin{figure}
     \centering
     \begin{subfigure}[b]{0.45\textwidth}
         \centering
         \includegraphics[width=\textwidth]{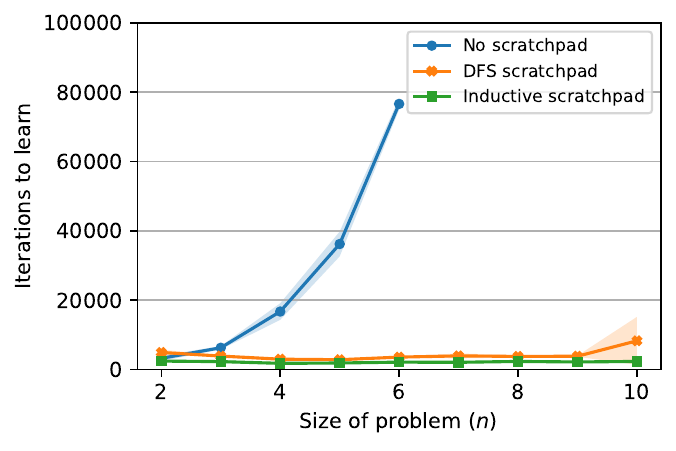}
         \caption{The cycle task is learned easily when the DFS scratchpad is used.}
         \label{fig:cycles-scratchpad}
     \end{subfigure}
     \hfill
     \begin{subfigure}[b]{0.45\textwidth}
         \centering
         \includegraphics[width=\textwidth]{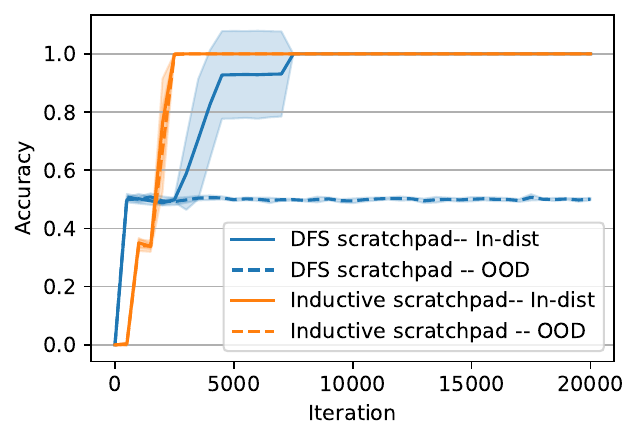}
         \caption{The DFS scratchpad fails to generalize to OOD samples while the inductive scratchpad can.}
         \label{fig:inductive-sp-learns}
    \end{subfigure}
    \caption{(Left) Learning the cycle task with a scratchpad. (Right) OOD generalization for the DFS and inductive scratchpads (see Section \ref{sec:full-fails}).}  
\end{figure}

\begin{remark}
If one has full knowledge of the target function, one could break the target into sub-targets using an educated scratchpad to keep the globality low and thus learn more efficiently (of course one does not have to learn the target under full target knowledge, but one may still want to let a model learn it to develop useful representations in a broader/meta-learning context). One could in theory push this to learning any target that is poly-time computable by emulating a Turing machine in the steps of the scratchpad to keep the overall globality low. Some works have derived results in that direction, such as \cite{malach} for some type of linear autoregressive models, or \cite{abbe2020poly} for more abstract neural nets that emulate any Turing machine with SGD training. However, these are mostly theory-oriented works. In practice, one may be instead interested in devising a more `generic' scratchpad. In particular, a relevant feature in many reasoning tasks is the power of induction. For instance, the parity and cycle tasks are two examples where learning an induction step function appears useful.    
\end{remark}

\subsection{Inductive Scratchpads}
As discussed, scratchpads can break the global reasoning barrier with appropriate mid-steps. In this part, however, we show that fully educated scratchpads can be sensitive to the number of reasoning steps, translating into poor out-of-distribution (OOD) generalization. As a remedy, we put forward the concept of inductive scratchpad which applies to various reasoning tasks as in previous sections. 
\subsubsection{Educated scratchpad can overfit in-distribution samples}\label{sec:full-fails}
Consider the cycle task with $24$ nodes. For the test distribution, we use the normal version of the cycle task, i.e., either two cycles of size $12$ and the nodes are not connected or a single cycle of size $24$ where the distance between the query nodes is $12$. For the train distribution, we keep the same number of nodes and edges (so the model does not need to rely on new positional embeddings for the input) but break the cycles to have uneven lengths: (1) a cycle of size $6$ and a cycle of size $18$ when the two nodes are not connected (the source query node is always in the cycle of size $6$) or (2) a cycle of size $24$ where the nodes are at distance $6$. Thus, in general, we always have $24$ nodes/edges in the graphs. However, the length of the DFS path (i.e., number of reasoning steps) is $6$ at training and $12$ at test. We trained our model on this version of the task with the DFS scratchpad. The results are shown in Figure \ref{fig:inductive-sp-learns}. We observe that the model quickly achieves perfect accuracy on the training distribution, yet, it fails to generalize OOD as the model overfits the scratchpad length and number of reasoning steps. 
In the next part, we introduce the notion of inductive scratchpad to fix this problem.

\subsubsection{Inductive scratchpad: definition and experimental results}
In a large class of reasoning tasks, one can iteratively apply an operation to some state variable (e.g., a state array) to compute the output. This applies in particular to numerous graph algorithms (e.g., shortest path algorithms such as BFS or Dijkstra's algorithm), optimization algorithms (such as genetic algorithms or gradient descent), and arithmetic tasks.

\begin{definition}[Inductive tasks]
Let $Q$ be the question (input). We say that a task can be solved inductively when there is an induction function (or a state transition function) $g$ such that  
\begin{align*}
    s[1]  = g(Q, \emptyset), 
    \quad 
    s[2] = g(Q, s[1]), \quad 
\ldots , \quad
    s[k] &= g(Q, s[k-1]),
\end{align*}
where $s[1], \ldots, s[k]$ are the steps (or states) that are computed inductively. For example, the steps/states could be an array or the state of an automata that is being updated. Note that the termination is determined by the state. In the context of Transformers, one can use the generation of the end of sequence token \texttt{<EOS>} to terminate.  
\end{definition}

\paragraph{Inductive tasks with a fully educated scratchpad can overfit proofs.}
The fully educated scratchpad for the question $Q$ as input would be $\texttt{s[1];s[2];...;s[k]<EOS>}$, where the token \texttt{<EOS>} ends the generation. However, this method may not fully utilize the fact that each state is only generated from the last state by applying the same (set of) operation(s). In particular, \texttt{s[k]} typically attends to all of the previous states. Further, the model may not be able to increase the number of induction steps beyond what it has seen during training, as shown in Figure \ref{fig:inductive-sp-learns} for the cycle task. 

Now we show that by using attention masking and reindexing the positions of the tokens, one can promote the desired `inductive' behavior. We call this the inductive scratchpad. As three showcases, we demonstrate that the inductive scratchpad can improve OOD generalization on the cycle task and length generalization on parity and addition tasks. 

\paragraph{Inductive scratchpad implementation.} The inductive scratchpad for an inductive task is similar in format to the fully educated scratchpad but it has the following modifications: 
(1) tokens: two new special tokens are used: the \texttt{<START>} token which separates the question from the intermediate states and the~\texttt{<STATE>}~token (denoted \texttt{\#} hereafter) to separate the states. Using these tokens, for an input question $Q$, the format of the inductive scratchpad reads $\texttt{<START>s[1]\#s[2]\#...\#s[k]<EOS>}$. 
(2) generation: we want the model to promote induction and thus `forget' all the previous states except the last one for the new state update. I.e., we want to generate tokens of \texttt{s[i+1]} as if the input was \texttt{Q<START>s[i]\#}. To implement this, one can use attention masking and reindex positions (in order to have a proper induction) or simply remove the previous states at each time; 
(3) training: when training the scratchpad, we want the model to learn the induction function $g$, i.e., learning how to output $\texttt{s[i+1]\#}$ from $\texttt{Q<START>s[i]\#}$, which can be achieved with attention masking and reindexing the positions. As a result, the inductive scratchpad can be easily integrated with the common language models without changing their behavior on other tasks/data.
We refer to Appendix \ref{app:inductive-sp-implement} for a detailed description of the inductive scratchpad implementation. 

\paragraph{Inductive scratchpad for the cycle task.} The DFS scratchpad of the cycle task can be made inductive by making each step of the DFS algorithm a state. E.g., for the input
\texttt{a>x;n>y;q>a;t>n;y>t; x>q;a?t;}, the DFS scratchpad is \texttt{a>x>q>a;0<EOS>}, and the inductive scratchpad becomes \texttt{<START>a\#x\#q\#a;0<EOS>} where each state tracks the current node in the DFS. 
In Figure \ref{fig:inductive-sp-learns}, we show that the inductive scratchpad for the cycle task can generalize to more reasoning steps than what is seen during training, and thus generalize OOD when the distance between the nodes is increased. 

\paragraph{Length generalization for parity and addition tasks.} We can use inductive scratchpads to achieve length generalization for the parity and addition tasks. For parities, we insert random spaces between the bits and design an inductive scratchpad based on the position of the bits and then compute the parity iteratively. The performance of this inductive scratchpad is depicted in Figure \ref{fig:parity-length-gen} where we can see a Transformer trained on inputs with up to $30$ bits can generalize to samples with up to $50/55$ bits depending on the seed.  For the addition task, we propose two inductive scratchpads. (1) \textit{Random space method} that requires random spaces between the digits in the input and uses the position of the digits to compute the addition digit-by-digit (similar to the parity). With this scratchpad, we can generalize to numbers with 18 digits while training on numbers with up to 10 digits. (2) \textit{Shift method} that uses random tokens in the input and computes the addition digit-by-digit by shifting the operands. The latter enables us to generalize from 4 to 26 digits at the cost of having a less natural input formatting. The results for different seeds are provided in Figure \ref{fig:length-gen-addition}. See details of these scratchpads in Appendices~\ref{app:parity-length-gen},~\ref{app:addition-length-gen}.\footnote{Our code is available at \url{https://github.com/aryol/inductive-scratchpad}.} A rough comparison between the performance of different methods for addition is given in Table \ref{tab:addition-comparison-main}. Note that the settings used in these works are not exactly the same, e.g., our methods often work with smaller models and more natural input formatting. See Appendix \ref{app:len-gen-parity-addition} for a detailed comparison for both the parity and addition tasks.

\begin{table}[h]
\vspace{-3mm}
    \centering
    \caption{Length generalization of different methods for the addition task where our methods are shown in bold. $a \rightarrow b$ means generalizing to $b$ digits when trained on $a$ digits.}
    \vspace{1mm}
    \label{tab:addition-comparison-main}
    \small
    \addtolength{\tabcolsep}{-2.5pt}
    \begin{tabular}{|c|c|c|c|c|c|c|c|}
    \hline
        \textbf{Method} & \cite{kazemnejad2023impact} & \cite{bubeck} & \cite{zhou2023algorithms} & \cite{jelassi2023length} & \cite{zhou2024transformers} & \textbf{random space method} & \textbf{shift method} \\\hline
        \textbf{Performance} & $8\rightarrow 9$ & $10 \rightarrow 12$ & $40 \rightarrow 50$  & $5 \rightarrow 15$ & $40 \rightarrow 65$ & $10 \rightarrow 18$ & $4 \rightarrow 26$ \\\hline
    \end{tabular}
    \addtolength{\tabcolsep}{2.5pt}
    \vspace{-1mm}
\end{table}

\begin{figure}[t]
     \centering
     \begin{subfigure}[t]{0.45\textwidth}
         \centering
         \includegraphics[width=\textwidth]{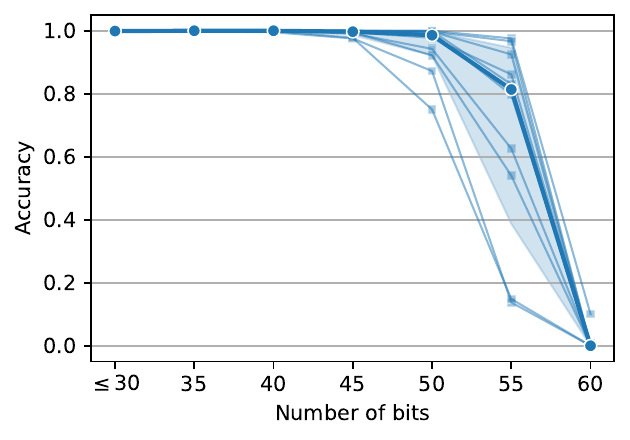}
         \caption{Length generalization for parity when training is done up to $30$ bits.}
         \label{fig:parity-length-gen}
     \end{subfigure}
     \hfill
    \begin{subfigure}[t]{0.45\textwidth}
         \centering
         \includegraphics[width=\textwidth]{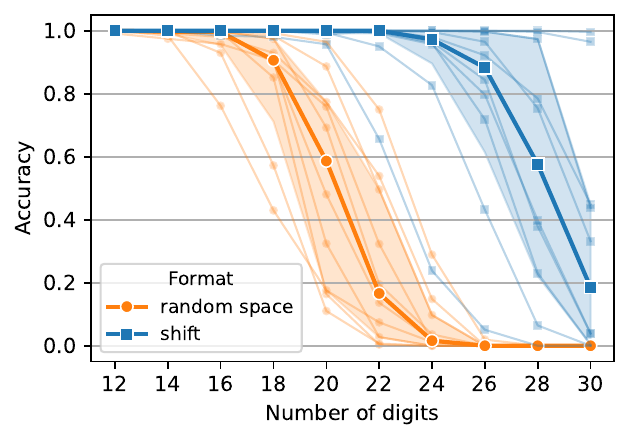}
         \caption{Length generalization for addition where the random space method is trained up to $10$ digits and the shift method is trained up to $4$ digits.}
         \label{fig:length-gen-addition}
     \end{subfigure}
    \caption{Length generalization for parity and addition tasks using different random seeds. The medians of the results are highlighted in bold.}  
\vspace{-4mm}
\end{figure}

\section{Conclusion}
This paper shows that for the learning objective and in contrast to expressivity results, Transformers trained from scratch have a `global reasoning barrier' quantified by the globality degree.  
The globality measure has a simpler form and broader applicability range than prior measures as discussed in Appendix \ref{lower_bound_ref}, it also has tighter applicability for Transformers. The measure is currently defined for weak learning (inverse-polynomial or constant edge), and a natural next step is to consider stronger learning requirements with notions of `globality leap', e.g., expanding the current work in the direction of \cite{abbe2023leap} but for more general distributions. Investigating the role of curriculum learning is another natural direction and we provide preliminary results here in Appendix~\ref{sec:curriculum-learning}.


The globality is also defined in the autoregressive setting, to better quantify when scratchpads can break targets into easier sub-targets. Two negative results are shown for scratchpads: agnostic scratchpads still suffer from the globality barrier, and fully educated scratchpads can have poor OOD generalization. This motivates the introduction of the inductive scratchpad. 

The inductive scratchpad can be used for a broad range of reasoning/algorithmic tasks and can easily be integrated into Transformers. The inductive scratchpad is independent of the number of reasoning steps since the model only learns the induction function. Consequently, the model better generalizes to inputs requiring different numbers of reasoning steps. This gives improvements of OOD/length generalization for the cycle task (Figure \ref{fig:inductive-sp-learns}), parity (Figure \ref{fig:parity-length-gen}), and addition (Figure \ref{fig:length-gen-addition}). 


Another interesting aspect is whether the model can use an inductive behavior on new tasks if it was pre-trained on prior inductive tasks. Note that the inductive behavior of the inductive scratchpad is only determined by two special tokens. Thus, in principle, models can generate these special tokens and go into the inductive state for other tasks if pre-trained on inductive data. We leave the general investigations of pre-trained models and the automated learning of more general scratchpads, capitalizing on the measures defined here, to future works.

\section*{Acknowledgments}
We thank Samira Abnar, Tatiana Likhomanenko, Joshua Susskind, and Russ Webb for stimulating discussions and useful feedback on the research of this paper, as well as Etai Littwin and Preetum Nakkiran for giving feedback on the paper's writing.

\bibliography{references}
\bibliographystyle{unsrt}
\newpage
\appendix

\section{Further related literature}\label{app:related-lit}
\subsection{Reasoning capabilities of Transformers} 
\paragraph{Reasoning vs. memorization.}
The performance of large language models and Transformers has been shown to improve as the model size, amount of data, and training time are increased \cite{kaplan2020scaling-laws, alabdulmohsin2022revisiting}. Furthermore, it has been shown that Transformers are prone to memorizing the training data \cite{zhang2016understanding, feldman2020neural-mem2, carlini2022quantifying-mem4, kandpal2022deduplicating-mem3, mahdavi2024memorization}. Thus, it is natural to ask whether Transformers mostly rely on memorization or if they in fact use memorization along with a significant degree of reasoning. Reasoning is also essential in solving more challenging tasks such as planning and mathematics. In recent years, the reasoning power of Transformers has been studied on synthetic reasoning tasks such as PVR \cite{Zhang2021PointerVR}, LEGO \cite{zhang2022unveiling}, algorithmic tasks such as CLRS \cite{velivckovic2022clrs}, and more natural settings such as solving math problems \cite{saxton2019analysing, lewkowycz2022solving-minerva}. Note that reasoning tasks often have a combinatorial nature and thus an exponentially large input space. Moreover, this input space may not lie on a low-dimensional manifold which makes memorization approaches ineffective. For example, in arithmetic, all digit combinations are often possible and also change the result significantly, whereas, in text, only specific combinations of words are valid and besides changing a single word often does not change the meaning of the text drastically. Another important criterion for reasoning is the ability to generalize to out-of-distribution (OOD) samples and in-particular length generalization \cite{zaremba2014learning, lake2018generalization, hupkes2020compositionality, anil2022exploring-length, icml-version}. It has been observed that simple tasks such as addition, multiplication, and parity are generally hard for Transformers both in the in-distribution setting and more notably in the length generalization setting \cite{anil2022exploring-length, lee2024teaching-arithmetic}.

\paragraph{Positional embeddings.}
An important component of length generalization is the positional embedding of the model. It has been shown that various forms of relative positional embedding \cite{Shaw2018SelfAttentionWR, Dai2019TransformerXLAL} including T5's relative positional embedding \cite{raffel2019exploring}, ALiBi \cite{alibi}, and Rotary \cite{rotary} can yield better length generalization performance than absolute positional embedding \cite{csordas2021devil, ontanon-etal-2022-length-generalization, kazemnejad2023impact}. In particular, the recent work of Kazemnejad et al. \cite{kazemnejad2023impact} evaluates different positional embeddings for decoder-only Transformers on a benchmark of reasoning tasks where it is shown that relative positional embeddings perform better than absolute ones. Interestingly, it is also shown that decoder-only Transformers with no positional embedding (NoPE) can still encode positional information and in fact have better length generalization performance than absolute/relative positional embeddings on certain tasks. The inductive scratchpad put forward in this paper also reindexes the position of the tokens for each new state. Using this technique, the inductive scratchpad circumvents the need for learning new positional information as the number of reasoning steps (i.e., the number of states) increases. Nevertheless, the inductive scratchpad can be used with both absolute and relative positional embeddings. We further discuss the relation between the inductive scratchpad and relative positional embeddings in Appendix~\ref{app:indutive-relative-PE}.

\paragraph{Architectural modifications.} Several architectural modifications have also been proposed that can potentially enhance the reasoning ability of Transformers on certain tasks. A line of work focuses on adding recurrent structures to Transformers \cite{dehghani2019universal, hutchins2022-block-recurrent, giannou23a-looped}. In particular, Universal Transformers \cite{dehghani2019universal}, share the weights between Transformer layers and also have a variable depth (similar to adaptive computation time \cite{graves2016adaptive}) implemented using a dynamic halting mechanism. Generally scratchpads and in particular the inductive scratchpad also share some recurrent flavor since when each token of the scratchpad is generated, the whole input and the scratchpad tokens are given to the Transformer again. The differences are however quite significant both on the side of supervision (e.g., scratchpads are usually supervised) and halting mechanism (e.g., generation of \texttt{<EOS>} token ends the process for scratchpad models). Another relevant architecture is the neural data router \cite{csordas2022-neural-data-router-ndr} where the weights are shared among transformer layers and also copying gates and a special attention format, geometric attention, are used. The copying gate allows some tokens not to be transformed and instead just be copied at certain transformer layers while geometric attention induces a recency bias. The neural data router is shown to generalize to samples with up to three more operations than what is seen during training for simple arithmetic tasks and ListOps dataset \cite{Nangia2018ListOpsAD}. We note that our inductive scratchpad can also generalize to samples with a higher number of operations/reasoning steps. Note that the neural data router approach requires the model's depth to be (at least) equal to the maximum number of operations, while our model does not have such limitations and can thus be trained and tested on samples with larger numbers of operations than the neural data router approach. Moreover, intermediate steps are supervised through the scratchpad mechanism in our approach while there is no supervision for intermediate steps in the neural data router.

\paragraph{Scratchpad and chain-of-thought.} Nye et al. \cite{nye2021work} put forward the idea of using scratchpads by showing that training Transformers on the intermediate steps in addition to the final answer can improve their performance on different tasks such as executing Python code, addition, and evaluating polynomials. 
Similarly, in chain-of-though reasoning (CoT) \cite{wei2023chainofthought} models are shown step-by-step demos of problem-solving (or models generate chains of thought without any examples as in zero-shot CoT \cite{kojima2023large}, among other variants). It has been further shown that using explicit explanations and reducing ambiguity can be proven useful as in the notion of algorithmic prompting \cite{zhou2022teaching}. 
Lanchantin et al. \cite{lanchantin2024self-note} introduced the concept of self-notes, showing that interleaving the intermediate reasoning steps within the question/context rather than after the question can boost the performance of scratchpad/CoT on certain tasks. Goyal et al. \cite{goyal2024pause-tokens} introduced pause tokens which act as dummy tokens that provide models with more compute time and processing before the generation of the true next token. On a related note, the iterative prompting method in \cite{sun2024enhancing} involves querying LLMs iteratively to optimize question prompts. Further, the Select and Inference framework (SI) \cite{creswell2022selectioninference} utilizes pre-trained LLMs as general processing modules, alternating between selection and inference to generate interpretable reasoning steps leading to the final results.

The work of \cite{feng2024towards} studies the role of scratchpad from an expressivity point of view. Assuming $TC^0 \neq NC^1$, they show the existence of tasks that are not expressible by constant-depth Transformers without scratchpad and expressible by constant-depth Transformers with scratchpad. It is further shown that dynamic programming (DP) algorithms can be expressed by constant-depth Transformers with scratchpad \cite{feng2024towards}. 
In this work, however, we used the concept of autoregressive globality to explain why scratchpads are helpful and how to design them from the learning point of view. Further, we introduced inductive scratchpads that are suitable for a broad range of algorithmic reasoning tasks and can potentially generalize to more complex samples than what appears in the train distribution. We note that inductive scratchpads are also suitable for DP algorithms thanks to their iterative structure (the variables of the DP algorithm can be updated in each state of the inductive scratchpad). We stress that our inductive scratchpad potentially enables models to generalize to OOD samples for such DP algorithms and/or to work with longer scratchpads because of the attention masking which reduces the effective context size. In concurrent research, \cite{cabannes2024iteration} shows that a restricted class of iterative algorithms with scratchpad (including parity task) are expressible by 2-layer Transformers. Compared to our inductive scratchpad, \cite{cabannes2024iteration} does not have any length generalization result.

\subsection{Length generalization for parity and addition tasks} \label{app:len-gen-parity-addition}
Parity and addition tasks are two essential reasoning tasks that are challenging in the setting of length generalization where the number of bits/digits is increased. These tasks can also be hard in the in-distribution setting if the number of bits/digits is large enough.
It has been shown that certain forms of scratchpad/chain-of-though reasoning and relative positional embedding can achieve a modest length generalization on the parity task (around 5/10 more bits depending on the method and the seed) \cite{anil2022exploring-length,kazemnejad2023impact}. The RASP-L work \cite{zhou2023algorithms} considers the parity task and can get length generalization from 30 to 50 bits (depending on the random seed) by using `index hints' which are special tokens that come before the input bits in a specific order. For instance, an input could look like $a0b0c1d1e0$. Our proposed inductive scratchpad for the parity task only requires the use of random spaces in the question formatting and can generalize to 55 bits when trained on samples with up to 30 bits. Different works use different input formatting, techniques, model sizes, and train/test datasets, nevertheless, a rough comparison between different works for the parity task is given in Table \ref{tab:parity-app-len-gen} (learning is defined as above $80\%$ accuracy for the majority/median of the results for different seeds, some seeds do better than others).

\begin{table}[h!]
\centering
\small 
\caption{Length generalization performance of different methods for the parity task}
\label{tab:parity-app-len-gen}
\begin{tabular}{lll}
\toprule
\textbf{Work} & \textbf{Performance} & \textbf{Method and Assumptions} \\
\midrule
Kazemnejad et al. \cite{kazemnejad2023impact} & From 8 bits to 12 bits & \makecell[l]{Using NoPE (no positional embedding)} \\
\midrule
RASP-L work \cite{zhou2023algorithms} & From 30 bits to 50 bits & \makecell[l]{Using scratchpad + ‘index hints’ \\ (special tokens before each bit in the input),\\
an input \& output look like $a0b0c1d1e0 > +c-d+$} \\
\midrule
Anil et al. \cite{anil2022exploring-length} & From 8 bits to 20 bits & \makecell[l]{Using large pre-trained models (128B) \\ + prompting + fine-tuning + scratchpad} \\
\midrule
\textbf{Our method} & From 30 bits to 55 bits & \makecell[l]{Using random spaces in the input \\(e.g., \_01\_10\_0\_\_1\_) + inductive scratchpad} \\
\bottomrule
\end{tabular}
\end{table}

For the addition task, Lee et al. \cite{lee2024teaching-arithmetic} use a decoder-only model similar to ours (also similar in size) showing that generating the output in the reverse format and using scratchpads are helpful. However, they are not able to get length generalization with their model size. Nevertheless, it has been shown that large enough models (with more than $10^8$ parameters) with a scratchpad can generalize to numbers with 9/10 digits while being trained on numbers with up to 8 digits \cite{nye2021work}.
Jelassi et al. \cite{jelassi2023length} show that encoder-only models with relative positional embedding can generalize to numbers with 15 digits when they are trained on numbers with up to 5 digits. The RASP-L work \cite{zhou2023algorithms} considers the addition task and can get length generalization from 35/40 to 50 digits (depending on the seed) by outputting the result in the reverse order and also using `index hints' (special tokens coming before the operands' digits in a specific order). For instance, an example would look like $a5b4+a3b7=b1a9$. Zhou et al. \cite{zhou2024transformers} further improve the latter by using FIRE relative positional embedding \cite{li2023functional-FIRE} and randomized position encoding \cite{ruoss2023randomized-randomized-pos-enc}, generalizing to numbers with 65 digits when trained on numbers with up to 40 digits. 
The work of Shen et al. \cite{bubeck} uses a scratchpad with a recursive format for the addition task to generalize to numbers with 12/13 digits while the training data includes numbers with up to 10 digits. Our special case of inductive scratchpad based on shifting the numbers is similar to their recursive scratchpad, however, \cite{bubeck} does not enforce any inductive structure (as achieved with the attention masking for the inductive scratchpad) and hence \cite{bubeck} gets a much more limited length generalization.

In this work, we proposed two inductive scratchpads for the addition task. The inductive scratchpad that requires random spaces in the question can generalize to numbers with  18/20 digits when trained on numbers with up to 10 digits. The other inductive scratchpad based on shifting the operands at each step can generalize to numbers with up to 26 digits when trained on numbers with up to 4 digits at the cost of having a less natural question formatting. Different works use different input formatting, techniques, model sizes, train/test distribution, and evaluation procedures, nonetheless, a rough comparison between different works for the addition task is provided in Table \ref{tab:addition-app-len-gen} (reported results correspond to the median performance given by different seeds). We believe our solutions for both the parity and addition tasks require one of the least stringent modifications of the input and provide a significant improvement of the length generalization performance compared to prior works even though the models that we have used are often remarkably smaller.

\begin{table}[h!]
\centering
\small
\caption{Length generalization performance of different methods for the addition task}
\label{tab:addition-app-len-gen}
\begin{tabular}{lll}
\toprule
\textbf{Work} & \textbf{Performance} & \textbf{Method and Assumptions} \\
\midrule
Kazemnejad et al. \cite{kazemnejad2023impact} & From 8 digits to 9 digits & \makecell[l]{Using NoPE (no positional embedding)} \\
\midrule
Shen et al. \cite{bubeck} & From 10 digits to 12 digits & \makecell[l]{Scratchpad with recursive format} \\
\midrule
RASP-L work \cite{zhou2023algorithms} & From 40 digits to 50 digits & \makecell[l]{Reverse order of the output + ‘index hints’ \\ (special tokens before each digit),\\ e.g., $a5b4 + a3b7 = b1a9$} \\
\midrule
Jelassi et al. \cite{jelassi2023length} & From 5 digits to 15 digits & \makecell[l]{Encoder-only model \\+ relative pos. emb. + padded inputs} \\
\midrule
Zhou et al. \cite{zhou2024transformers} & From 40 digits to 65 digits & \makecell[l]{FIRE relative pos. emb. + randomized \\ position encodings + reversed output \\+ index hints} \\
\midrule
\textbf{Our random space method} & From 10 to 18 digits & \makecell[l]{Inductive scratchpad +  random space in \\ the input (e.g., \(94\_+\_3\_\_1=\))} \\
\midrule
\textbf{Our shift method} & From 4 to 26 digits & \makecell[l]{Inductive scratchpad + random text before \\ each operand (e.g., \(fs\$46+ih\$98\))} \\
\bottomrule
\end{tabular}
\end{table}

\subsection{Reasoning over graphs}
Numerous reasoning tasks can be thought of as some form of rule-based logical inference or entailments, over implicit or explicit graphs, where at their core sit common graph operations such as graph connectivity checks  (as in the cycle task in the present paper). Here, we review a sample of notable related works on logical reasoning over graphs sharing the same primary motivation as the one behind our current work.

In recent years, several benchmarks have emerged, focusing on logical inference over diverse modalities. Within the context of natural language, LogiQA \cite{liu2020logiqa}, DROP \cite{dua2019drop} or the Cluttr benchmarking set \cite{sinha2019clutrr} are worth mentioning. These benchmarks assess logical relational reasoning abilities concerning entities and relations expressed in natural language. Another example but from a different modality is STAR \cite{wu2021star_situated_reasoning}, which focuses on situated logical reasoning over video clips. The work by Ontanon et al. \cite{ontanon2022logicinference} introduced the LogicInference dataset for training and benchmarking sequence-to-sequence models for logical inference tasks. The work by Wang et al. \cite{wang2024language} introduces NLGraph, a benchmarking set for measuring LLMs’ capabilities in solving graph-based problems, including the graph connectivity task studied in the current paper. GraphQA, as presented in the work by Fatemi et al. \cite{fatemi2023talk}, explores the impact of graph structure on language model prompting, while separating the impact of graph encoding and question prompting on the final performance. The common denominator of these studies' results is that while language models excel as shallow reasoners, their performance in logical reasoning deteriorates with an increase in the number of required reasoning steps~\cite{creswell2022selectioninference}.

On the modeling front, a group of efforts have primarily focused on baking the appropriate inductive biases into the models for reasoning and logical inference over graphs. The work  \cite{zhang2022greaselm} proposes architectural innovations to use both the context and external knowledge sources. \cite{cheng2023neural} proposes an end-to-end neural model for learning compositional logical rules. The Neurosymbolic approach, presented in \cite{Olausson_2023} involves performing semantic parsing on natural language input using neural components to transform the input into discrete structures suitable for symbolic reasoning by the logic theorem provers. Graphomer \cite{ying2021transformers} extends standard Transformer architecture with a graph reasoning inductive bias. The work of \cite{dehghani2019universal} combines Transformers with the inductive bias of the recurrent models. Other works take the route of generating more data using external sources like a Knowledge Graph~\cite{agarwal2021knowledge} or extending pre-training \cite{jin2023patton}.  
The work in \cite{wang2024language} evaluates LLMs' graph reasoning capabilities in the presence of various prompting techniques like scratchpads/CoT and their variants. This study introduces Build-a-Graph Prompting, an instruction-based prompting method, as well as Algorithmic Prompting, which includes references to relevant graph algorithms in the prompts, as two approaches for enhancing LLMs in solving graph tasks in natural language.

Concurrently with our work, \cite{sanford2024understanding} studies the graph reasoning abilities of Transformers. They focus on different tasks such as node count, edge count, connectivity, and shortest path. On the theoretical side, they focus on the expressivity of Transformers. In particular, they show that log-depth Transformers can express the graph connectivity task. This is while in our paper, we focus on learning, showing that for hard enough distributions (i.e., distributions with high globality), regular Transformers cannot learn the connectivity task despite being able to express it.  On the empirical side, they show that Transformers can perform well on the connectivity tasks in GraphQA dataset \cite{fatemi2023talk}, outperforming graph neural networks (GNNs). We note that this is not in contrast to our results. As we saw in Section \ref{hardness} with random graphs, if the input distribution has low globality the connectivity task can become learnable for Transformers.

\subsection{Learning measures and lower-bounds for GD}\label{lower_bound_ref}

Some recent literature has studied complexity measures for (S)GD-trained neural networks. In particular, the noise sensitivity \cite{o'donnell_2014,Zhang2021PointerVR,abbe2022learning, hahn2024sensitive}, which applies mostly to settings with i.i.d. inputs and is known to be loose for strong learning \cite{mergedstaircase,abbe2023leap}; the statistical query (SQ) dimension \cite{kearns,feldman_gen} and the cross-predictability \cite{abbe2020poly}, which are usually defined for a class of targets/distributions rather than a single distribution (in particular the full parity is efficiently SQ learnable since there is a single function); the NTK alignment \cite{jacot2018neural,cortes} that are limited to the NTK framework; the initial alignment (INAL) \cite{AbbeINAL}, which is also related to the noise-sensitivity with the advantage of depending on the network initialization at the cost of being a more implicit measure; the information exponent \cite{arous2021online,bruna1}, generative exponent \cite{bruna2} and leap \cite{abbe2023leap}, which measure when fully connected neural networks can strongly learn target functions on i.i.d.\ or isotropic input distributions and sparse or single/multi-index functions.   

We now discuss the proof techniques for Theorem \ref{3cycleTheorem}. 
We prove that the Transformer cannot learn to distinguish between $1$ cycle and $3$ cycles by means of a statistical query argument. We find a group of permutations that preserve the input distribution, take the orbit of the target function under these permutations, and show that a random pair of functions in the orbit are nearly uncorrelated. Then we argue that that means that no function is significantly correlated with a random function in the orbit, so the gradient is uninformative and the net fails to learn anything meaningful. The main complication to this argument is the fact that the input distribution is fairly complicated, in contrast to orbit arguments used in previously discussed works (e.g., for subset parities). The majority of the symbols in the input are fixed and the rest have nontrivial dependencies. However, we get around that by dividing the input into blocks and observing that switching the nonfixed symbols in any two blocks leaves the overall probability distribution unchanged. This argument would largely carry over to other cases with a sufficiently symmetric input distribution and high globality target function.

\subsection{On RASP and RASP-L}
In \cite{rasp}, the authors define a programming language, RASP, to model the types of computations a fixed-size Transformer can compute. This is more about expressivity than learnability, but to the degree that a Transformer will tend to learn a computation representable by a short RASP program if there is one that achieves low loss, it does have some implications for learning. For instance, \cite{zhou2023algorithms} observes that Transformers tend to generalize to different input lengths on tasks that can be represented by short RASP programs but not on other tasks. It also seems plausible that for tasks that can be solved by short RASP programs a sufficiently small Transformer would have a nontrivial probability of stumbling on the solution. However, we expect that randomly initialized large Transformers would tend to mix together a large number of computations in a chaotic manner, at least until they find some computation that they can improve their performance by focusing more on. So, we do not expect the existence of a short RASP program solving a problem to imply that Transformers would learn to solve it easily.

Also, RASP-L \cite{zhou2023algorithms} is a variant of RASP that does not contain the full parity function as a short program. Our globality theory predicts that the full parity can be learned by some regular Transformer (in contrast to subset parities), so this also gives a nuance. Incidentally, our reason for expecting that some regular Transformer can learn the parity is that if the Transformer is set to mostly ignore the positional embeddings then in the first layer it will essentially average all of the inputs together. The correct label is a function of this average and it only has $n+1$ possible values, so it seems likely that with the right setup it would be able to memorize the appropriate response to each of them.

\section{Additional experiments}
\subsection{Implications on random graphs}\label{sec:ER}
Here, we further discuss the disadvantages of using random graphs as the graph distribution for the implications task. There are two main downsides to using random graphs distribution instead of the cycle task distribution (Definition \ref{def:cycle-task}): 
\begin{enumerate}
    \item The distance between nodes (i.e., the number of statements to compose) does not scale well with the number of nodes/edges in the graph.
    \item Whether two nodes are connected or not often correlates with low-complexity patterns such as the degree of the nodes in random graphs, thus, weak learning on random graphs does not necessarily imply that the model has truly learned to find a path between two nodes. In other words, the model may be able to rely on shortcuts instead of solving the composition task. 
\end{enumerate}
In this section, we provide empirical evidence for both of the claims above.

First, we consider random graphs with $n$ nodes and a varying number of edges $e$. For each pair of $(n,e)$, we compute the average of maximum distance and the average of the average distance in random graphs with $n$ nodes and $e$ edges. (We ignore the nodes that are not connected.) The results for  $n=128$ are presented in Figure \ref{fig:ER-distribution}. It can be seen that the distances in the graphs do not scale well with the number of nodes and edges in the graph. (E.g., compare this to having distance $n$ in the cycle task with $2n$ nodes/edges.) 
This is because a high number of edges usually results in a very well-connected graph and a low number of edges leads to mostly isolated edges.

\begin{figure}[hbt]
    \centering
    \includegraphics[width=0.6\textwidth]{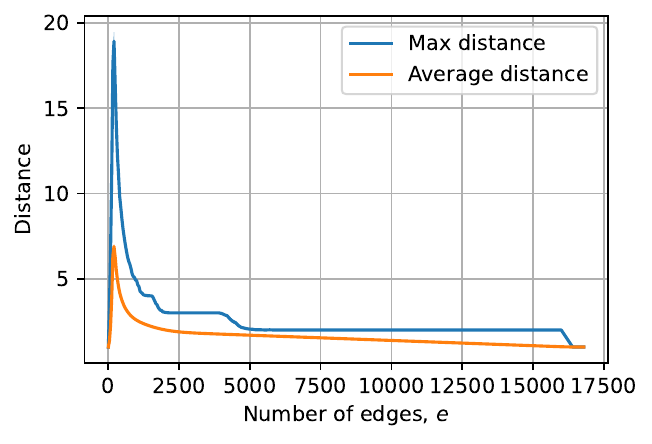}
    \caption{The average of the maximum and average distance in directed random graphs with $n=128$ nodes and a varying number of edges.}
    \label{fig:ER-distribution}
\end{figure}

Now, we move to the second claim, i.e., the model using low-complexity patterns and correlations. 
As an example, we take random graphs with 24 nodes and 24 edges. In order to have a balanced dataset with samples of mixed difficulties, we create the dataset as follows. We first sample a random graph with 24 nodes and edges. Then with probability $0.5$ we select two nodes that are not in the same connected component (label 0) and with probability $0.5$ we choose a distance $d \in \{1,2,3,4\}$ uniformly and we choose two nodes that have distance $d$ (if the graph does not have any two nodes with distance $d$, we sample another random graph). As a result, our dataset is balanced and $12.5\%$ of the samples have distance $d$ for $d \in \{1,2,3,4\}$. We trained our model on this dataset and we observed that the model reaches an average accuracy of roughly $80\%$. The results are shown in Figure \ref{fig:ER}. More precisely, we observed that the model has perfect accuracy when the two nodes are connected (there is a path), and has around $60\%$ accuracy when the two nodes are not connected (the nodes are not in one connected component). In other words, the default behavior of the model is to say that the nodes are connected and the model can also detect that two nodes are not connected in $60\%$ of the cases. 

\begin{figure}[tb]
    \centering
    \includegraphics[width=0.6\textwidth]{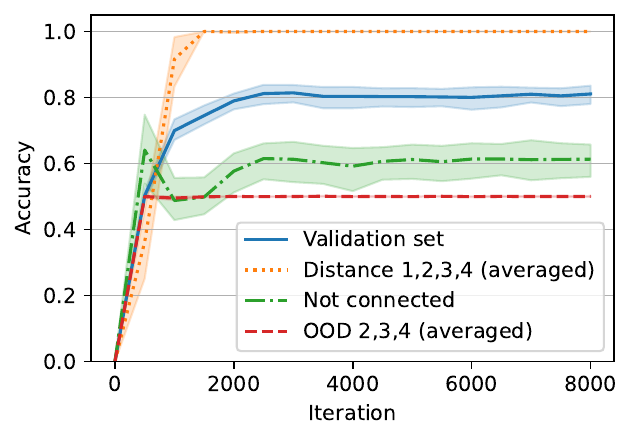}
    \caption{Performance of a model trained on a balanced distribution of random graphs with 24 nodes and edges where with probability $0.5$ the query nodes are not connected and with probability $0.5$ they are connected and their distance is uniformly selected from $1,2,3,4$. The validation set has the same distribution as the training set showing that the model reaches around $80\%$ accuracy on in-distribution samples. Particularly, the model has perfect accuracy on connected nodes (distance 1-4) and around $60\%$ accuracy on the nodes that are not connected. However, when we tested the model on OOD samples (where some spurious correlations are not present) the model showed a chance level performance. Note that these samples would be of low complexity if the model was actually checking whether there exists a path or not.}
    \label{fig:ER}
\end{figure}

To further test whether the model is truly understanding that the two nodes are not connected or it is only relying on low-complexity correlations, we designed new data distributions and assessed the model's behavior on these new distributions. The samples in the new distributions also have $24$ nodes and edges so the model does not have a length generalization problem. More specifically, for $i \in  \{2,3,4\}$, we designed distribution $OOD~i$ such that each dataset is balanced and for each sample, the two nodes are either in a cycle of size $2i$ with distance $i$ or they are in two disjoint cycles of size $i$. All the other nodes are also in different cycles.\footnote{For example, for $i=3$, distribution $OOD~3$ consists of graphs with 4 cycles of size 6 where the nodes are in a single cycle and their distance is 3 and graphs with 5 cycles of sizes 3,3,6,6,6 where the query nodes are in the two cycles of size 3.} Note that these distributions are motivated by the cycle task. For example, it is not possible for the model to merely rely on the degree of the nodes. However, if the model uses the correct algorithm (i.e., tries to find a path) then the number of reasoning steps (e.g., length of the BFS/DFS search) is $i$ as the distance between the nodes is $i$ when they are connected and otherwise they are connected exactly to $i-1$ other nodes. As it can be seen in Figure \ref{fig:ER}, the model has $50\%$ (random) accuracy on these distributions meaning that it is not really checking whether there is a path between two nodes or not, even for simple examples in $OOD~2$ supporting that the model is relying on correlations rather than finding a path. (In particular, the model always outputs connected on these OOD datasets.)

We tried to further understand the behavior of this model. By sampling, we computed that one can get an accuracy of around $82\%$ on in-distribution samples just by outputting not-connected if the out-degree of the source query node or the in-degree of the destination query node is zero and connected otherwise. Further, we noticed that this predictor has a high correlation with the output of the model. In particular, in almost all of the cases that the model predicts `not-connected', the source's out-degree or the destination's in-degree is zero. (The model may still misclassify some of such samples depending on the random seed.) The latter shows that the model is indeed relying on the degrees of the query nodes as a shortcut.

\subsection{Change of distribution and curriculum learning}\label{sec:curriculum-learning}
We have defined the cycle task such that all samples in the dataset have the same difficulty. More specifically, if the two nodes are connected their distance is $n$ and if they are not connected they are each in a cycle with $n$ vertices. Thus, it is a natural question to ask what would happen if the training distribution included samples of varying difficulties. To investigate the answer to this question, we use a distribution with samples of mixed difficulties for the training. Furthermore, we try curriculum learning \cite{curriculum} by increasing the samples' difficulty throughout the training.

\paragraph{Mixed distribution.} We change the training distribution to a uniform mixture of the cycle task distribution for sizes $i=2, \ldots, n$, i.e., each sample comes from the distribution of the cycle task for size $i$ (having $2i$ nodes and edges) with probability $\frac{1}{n-1}$. Therefore, in the mixed distribution setting, the number of reasoning steps varies between $2$ to $n$. As an example, we set $n=7$ and train our model with fresh samples from the mixed distribution. We assess the model's performance on cycle task samples of sizes $2, \ldots, n=7$ (where we include both connected and disconnected cases as in the original definition). The results are shown in Figure \ref{fig:mixed-distribution}. It can be seen that the cycle task is learned in the order of difficulty (i.e., size). Further, note that when a mixed distribution is not used, weak learning of the cycle task for $n=7$ is not possible up to $100k$ iterations (see also Figure \ref{fig:no-cot-fails}), whereas here, weak learning for $n=7$ begins in the first $30k$ iterations. Note that this is not in contrast with our theoretical results since including easy samples reduces the globality. Analogously, in the setting of learning parities, it has been shown that using a biased rather than uniform distribution (which makes the distribution simpler) can make parity targets easier to learn \cite{quantifying, cornacchia2023mathematical}.

\begin{figure}
\centering
     \begin{subfigure}[t]{0.45\textwidth}
         \centering
         \includegraphics[width=\textwidth]{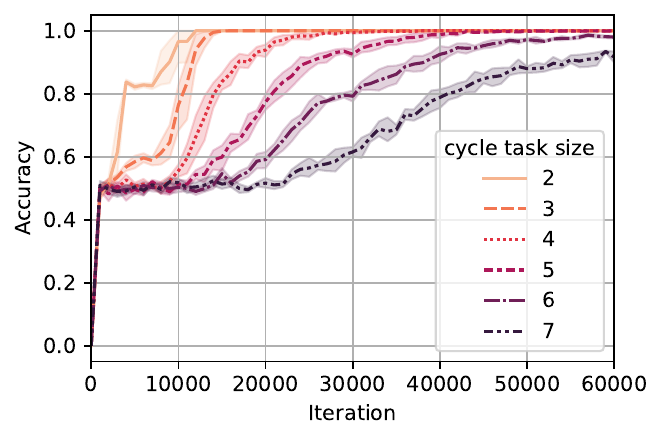}
         \caption{Mixed distribution of different sizes at training time.}
         \label{fig:mixed-distribution}
     \end{subfigure}
     \hfill
     \begin{subfigure}[t]{0.45\textwidth}
         \centering
         \includegraphics[width=\textwidth]{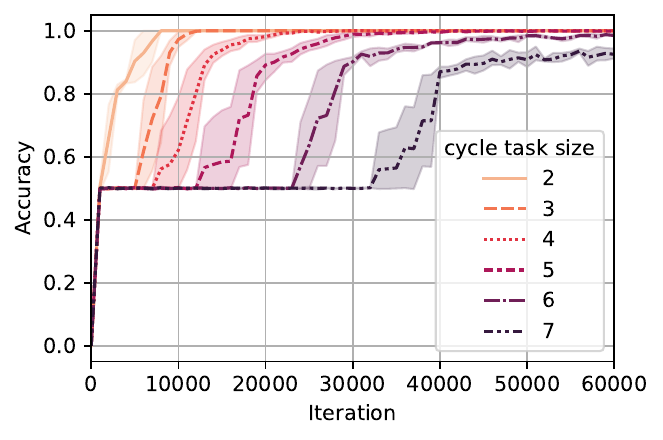}
         \caption{Curriculum learning during training where previous samples are repeated avoiding them from being forgotten.}
         \label{fig:mixed-curriculum}
     \end{subfigure}
    \caption{Accuracy for cycle tasks of varying sizes where a mixed distribution (left) and curriculum learning (right) have been used during training. It can be seen that using both a mixed distribution of samples with different difficulties and curriculum learning can reduce the learning time.}
\end{figure}

\paragraph{Curriculum learning.} Next, we try curriculum learning, i.e., we give samples in the order of difficulty (size in the cycle task) to the model during training. We consider two settings: (1) a setting in which the model has to fit samples of all difficulties and (2) a setting in which the model is allowed to forget easier samples. In other words, in the first setting, we want the model to fit cycle task samples of sizes $2, \ldots, n$ while in the second setting, we only care about fitting samples of size $n$. We start with the first setting which is closer to the notion of mixed distribution above. We consider distributions $D_2, \ldots, D_n$ such that distribution $D_i$ is a uniform mixture of cycle task samples of sizes $2, 3, \ldots, i$ (e.g., $D_n$ is the mixed distribution used for the mixed distribution setting of Figure \ref{fig:mixed-distribution}). We start training on $D_2$ and we change training distribution from $D_i$ to $D_{i+1}$ when reaching a $95\%$ accuracy on $D_i$. The results for this curriculum setting are provided in Figure \ref{fig:mixed-curriculum}. Comparing this curriculum setting to the use of mixed distribution without curriculum (Figure \ref{fig:mixed-distribution}), we can see that curriculum learning helps the model reach a high (e.g., $80\%$) accuracy slightly faster. Nevertheless, note that weak learning starts earlier in the mixed distribution setting, as the model is trained on samples of all difficulties from the beginning. The general observation that beyond using a mixed distribution, curriculum is helpful for learning has been previously shown both theoretically \cite{abbe2023provable} and empirically \cite{soviany2022curriculum}.

Now, we move to the second setting where we allow easier samples to be forgotten. More precisely, we consider distributions $D_2, \ldots, D_n$ such that distribution $D_i$ is the distribution of samples of the cycle task of size $i$ (i.e., $2i$ nodes and edges). Similarly, we start training on $D_2$ and we go from $D_i$ to $D_{i+1}$ when reaching a $95\%$ accuracy on $D_i$. We present the accuracy curves for a single random seed in Figure \ref{fig:curriculum-forget}. We further provide the average number of iterations required to reach $0.95\%$ accuracy for the cycle task of different sizes in Figure \ref{fig:curriculum-forget-iters}. It can be seen that the time complexity for this variant of the curriculum method is lower than the former curriculum method at the cost of forgetting samples of smaller sizes. 

\begin{figure}
\centering
     \begin{subfigure}[t]{0.45\textwidth}
         \centering
         \includegraphics[width=\textwidth]{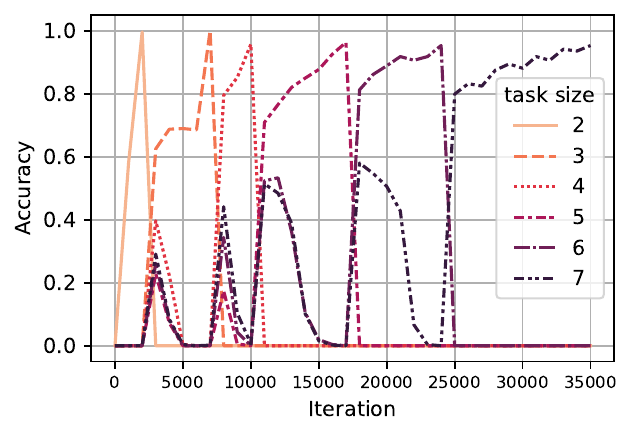}
         \caption{Accuracy curves for cycle tasks of different sizes when a curriculum is used. We can see that in this version of the curriculum where we do not repeat the samples, samples from the previous distribution are indeed forgotten by the model.}
         \label{fig:curriculum-forget}
     \end{subfigure}
     \hfill
     \begin{subfigure}[t]{0.45\textwidth}
         \centering
         \includegraphics[width=\textwidth]{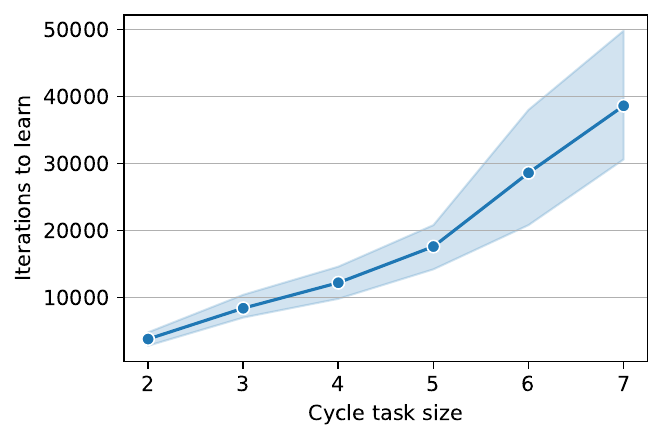}
         \caption{Average number of steps for learning the cycle task with curriculum learning where previous samples are not repeated and allowed to be forgotten.}
         \label{fig:curriculum-forget-iters}
     \end{subfigure}
    \caption{Curriculum learning based on sizes of the task used at training time. Here, samples of smaller sizes are allowed to be forgotten. The left plot presents accuracy for different sizes for a single run while the plot on the right presents the average number of iterations required for learning using curriculum for different sizes.}
\end{figure}

In sum, using distributions with samples of a mixed difficulty and also curriculum learning can reduce the learning complexity. (E.g., they made cycle task of size $7$ learnable). Nevertheless, the scratchpad approaches are still significantly more efficient (see Figure \ref{fig:cycles-scratchpad}).

\subsection{Learning parities with scratchpad}\label{sec:parities-scratchpad}
Consider $n$ bits $x_1, \ldots, x_n \in \{-1, +1\}$ with a uniform distribution over the Boolean hypercube. As discussed in Section \ref{sec:sp-parity}, learning the parity of any fixed $\frac{n}{2}$ bits is exponentially hard (e.g., an exponential number of iterations in $n$ if the model size is fixed). Note that the globality is $\frac{n}{2}$ in this setting. Here, we particularly focus on learning the parity of the first $\frac{n}{2}$ bits with a scratchpad of cumulative parities as discussed in Section~\ref{sec:sp-parity}. In other words, we design a scratchpad given by the sequence 
\begin{equation*}
    y_1=x_1,~~y_2=x_1x_2,~~y_3=x_1x_2x_3,~~\ldots,~~y_{i+1} = x_{i+1}y_i,~~ \ldots,~~y_{\nicefrac{n}{2}} = x_1x_2\cdots x_{\nicefrac{n}{2}},
\end{equation*}
where the autoregressive globality is $2$. With this scratchpad of cumulative parities, we expect the half-parity function to be learned efficiently. We confirm the latter empirically in Figure \ref{fig:parity-sp}. It can be seen that our decoder-only Transformer can efficiently learn half-parities of growing sizes (e.g., parity of the first 200 bits from 400 bits is the rightmost data point). We note that we have repeated each experiment for $10$ different random seeds and we observed that the number of iterations depends on the the random seed of the experiment. Nevertheless, for all of the seeds, the parities are learned efficiently with a low number of iterations. 

\begin{figure}[tb]
    \centering
    \includegraphics[width=0.6\textwidth]{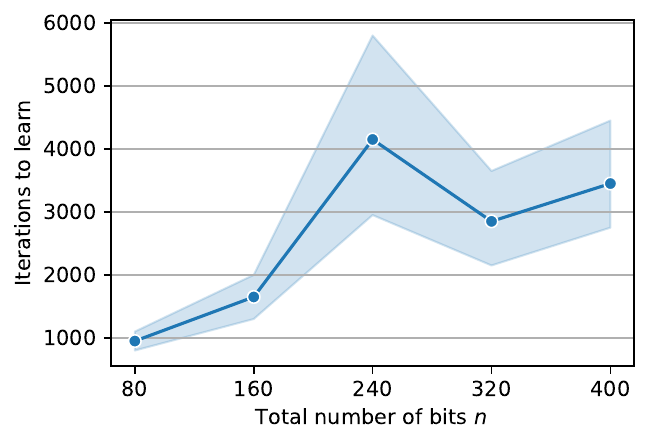}
    \caption{Learning the half-parity function (learning the parity of the first $\nicefrac{n}{2}$ bits from the total $n$ bits) for different numbers of bits using a scratchpad. It can be seen that the half-parity targets can be learned efficiently as the number of bits $n$ grows. Note that the random seed of the experiment can cause some variation in the number of iterations required for learning the parity.}
    \label{fig:parity-sp}
\end{figure}

\subsection{Length generalization for the parity task} \label{app:parity-length-gen}
Here, we provide an inductive scratchpad for the parity task which makes length generalization possible. We focus on the full parity problem, i.e., given a sequence of $0$s and $1$s determining whether the number of $1$s is even (output $0$) or odd (output $1$). 

First, we explain the input data format. We fix an ambient dimension $d_{\mathrm{amb}}$ and also assume that we have $n \leq d_{\mathrm{amb}}$ bits. Then, we choose $n$ random positions among $d_{\mathrm{amb}}$ possible ones and embed the input in dimension $d_{\mathrm{amb}}$. For the unused positions, we use a placeholder token such as a space or, here, for better presentation, an underscore. An example for $d_{\mathrm{amb}}=12$ and $n=6$ could be \texttt{\_01\_10\_0\_\_1\_}.

We design the inductive scratchpad as follows. Assuming that input has $n$ bits we define $n+1$ states for the inductive scratchpad. 
For $i \leq n$, the $i$th state is defined as 
\begin{center}
    \texttt{[<pointer>]<value of the $i$th bit>,<parity of the first $i$ bits>}
\end{center}
 where \texttt{<pointer>} determines the position of the $i$th bit and \texttt{<value of the $i$th bit>} is the value of the $i$th bit (which the model can learn to retrieve using the pointer) and \texttt{<parity of the first $i$ bits>} is the cumulative parity of the first $i$ bits that can be computed using its value in the previous state (parity of the first $i-1$ bits) and the value of the $i$th bit. For the last state, state $n+1$, we use  \texttt{[$d_\mathrm{amb}$]\_,<final parity><EOS>} where final parity is equal to the parity of the first $n$ bits that is computed in state $n$. So, one can easily check that this scratchpad is of low globality and each state can be easily computed using the previous state and question. As an example consider input \texttt{\_01\_10\_0\_\_1\_} again. The inductive scratchpad for this example can be written as 
 \begin{center}
     \texttt{<START>\textcolor{RoyalBlue}{[1]0,0\#}\textcolor{OrangeRed}{[2]1,1\#}\textcolor{Emerald}{[4]1,0\#}\textcolor{RoyalPurple}{[5]0,0\#}\textcolor{Maroon}{[7]0,0\#}\textcolor{OliveGreen}{[10]1,1\#}\textcolor{Dandelion}{[12]\_,1<EOS>}}
 \end{center}
where different states are shown by different colors. 
To show the length generalization ability of the inductive scratchpad we set $d_{\mathrm{amb}}=60$. Moreover, we train on samples with up to $n \leq N_\mathrm{train} = 30$ bits. Particularly, we use fresh samples and generate each sample such that the number of bits $n$ has a uniform distribution over $\{1, 2,\ldots, 30\}$. We train our base decoder-only model for $2000$ iterations and we test length generalization ability for different number of bits between $N_\mathrm{train} = 30$ and $d_{\mathrm{amb}}=60$. Results are reported in Figure \ref{fig:parity-length-gen}. As it can be seen the model generalizes well to inputs with $50$ bits. We also observed that the generalization ability on longer sequences (e.g., $55$ bits) is not robust and depends on the random seed of the model. 

\subsection{Length generalization for the addition task}\label{app:addition-length-gen}
Length generalization on arithmetic tasks and particularly on addition has received a surge of interest recently \cite{bubeck, jelassi2023length}. In this section, we focus on length generalization on the addition task where the model has seen the addition of numbers with up to $N_\mathrm{train}$ digits, and the model is tested on more digits $N_\mathrm{test}$. In particular, we provide two formats for the data and scratchpad which allow the model to length generalize. 

\paragraph{Inductive scratchpad using random spaces.} The first format is similar to our solution for the parity length generalization problem. First, we fix an ambient  dimension $d_{\mathrm{amb}}$. Assume, we want to add two operands with $n \leq d_{\mathrm{amb}}$ digits. We embed the two numbers and the plus sign $+$ in a $2d_{\mathrm{amb}}+1$ positions. Between them, we put spaces, or for better readability underscores. Also, we put an $=$ at the end. For example, for $x=94$, $y=31$ and $d_{\mathrm{amb}}=4$ we can have \texttt{94\_+\_3\_\_1=}. Now we explain the generation of the scratchpad. First, we generate a random sequence of tokens with size $d_{\mathrm{amb}} + 2$ such that it begins with `\$', e.g., \texttt{\$xgwg6} we call this text \texttt{ans[0]}. Now, we enter the induction mode. For $i > 0$, each state in the induction mode is given by
\begin{center}
    \texttt{s[i] = [<pointer to the $i$th digit of x>] <$i$th digit of x> [<pointer to the $i$th digit of y>] <$i$th digit of y> c <current value of the carry> r <ans[i+1]>} 
\end{center}
where $x,y$ are the operands, the pointer is counted from zero, and \texttt{ans[i+1]} is computed inductively 
\begin{center}
    \texttt{ans[i+1] = (<$i$th digit of x> + <$i$th digit of y> + <carry from the previous state> \% 10) <ans[i][:-1]>}
\end{center}
In other words, at each iteration, we shift the \texttt{ans} to the right (and lose the rightmost token). Instead, we concatenate the $i$th digit of the summation to it from the left. So in general, the model has to increase the pointers in the scratchpad, read their corresponding values, and do one summation using them (and the carry in the previous state) at each reasoning step. The scratchpad ends when both numbers are finished. Note that the answer is always the string to the left of $\$$ at the end of the text. Thus, the completed scratchpad for our example (where input is \texttt{94\_+\_3\_\_1=}) can be given by
\begin{center}
\small
\texttt{\$xgwg6<START>\textcolor{RoyalBlue}{[01]4[08]1c0r5\$xgwg\#}\textcolor{OrangeRed}{[00]9[05]3c1r25\$xgw\#}\textcolor{Emerald}{[-1]\_[03]\_c0r125\$xg<EOS>}}
\end{center}
where different states are represented in different colors.
With this scratchpad format, $N_\mathrm{train}=10$, and $d_{\mathrm{amb}}=30$ we can generalize to numbers with up to 18/20 digits (depending on the seed). See the results in Figure \ref{fig:length-gen-addition}.

\paragraph{Inductive scratchpad using shifts.} For the second solution, we also fix an ambient dimension. $d_{\mathrm{amb}}$. Assume, we want to add two operands with $n \leq d_{\mathrm{amb}}$ digits. For input formatting, we first concatenate a \$ to the left of the two operands.  Next, for each operand, we generate a random text of size $d_{\mathrm{amb}} - n$ and concatenate it to the left of the operands and call them \texttt{x[0]} and \texttt{y[0]}. Lastly, we use an $=$ to finish the question. For the scratchpad, we first generate a random text with $d_{\mathrm{amb}} +2$ tokens such that the leftmost token is \$. We call this \texttt{ans[0]}. Also, we use a variable for keeping the carry and we call it \texttt{c} which is zero at the beginning. In this format of the scratchpad, we do not use the \texttt{<START>} token meaning that the input question is also treated as a state (and is forgotten by the future states). We define each state ($i \geq 0$, $i=0$ corresponds to the question) 
\begin{center}
\texttt{s[i] = <x[i]>+<y[i]>=ans[i]|c[i]}
\end{center}
where \texttt{x[i]}, \texttt{y[i]} are computed by a right cyclic shifts of \texttt{x[i-1]}, \texttt{y[i-1]}
and \texttt{ans[i]}, \texttt{c[i]} are computed using
\begin{center}
    \texttt{ans[i] = (<RMD(x[i-1])>+<RMD(y[i-1])> + <c[i-1]> \% 10) ans[i-1][:-1]}\\
    \texttt{c[i] = 0 if (<RMD(x[i-1])>+<RMD(y[i-1])> + <c[i-1]> < 10) else 1}\\
\end{center}
where \texttt{RMD} represents the rightmost digit operator. In other words, at each step, we shift both the operands and the answer with the difference that operand shifts are cyclic, while for the answer we always add one digit of the correct answer to it from the left (and lose one digit from the right). Note that the position of the digits that we are adding always remains the same. Also, the addition is finished when \$ is the rightmost digit of both operands. When the addition is finished, we use one more state to output the final answer (possibly using the last carry variable).  For example, for $n=2$ and $d_\mathrm{amb}=4$, \texttt{fs\$46+ih\$98=} could be an input example. In this case, the scratchpad can be written as
\begin{center}
\texttt{\$kckn|0\#\textcolor{RoyalBlue}{6fs\$4+8ih\$9=4\$kck|1\#}\textcolor{OrangeRed}{46fs\$+98ih\$=44\$kc|1\#}\textcolor{Emerald}{144\$kc<EOS>}}
\end{center}
where different colors are used for different states. 
In the example above one can note that some part of the \texttt{s[0]} is in the question and some part of it is in the scratchpad.

Note that the formatting required for the input of this scratchpad is stronger than the previous scratchpad. However, here we can get a much stronger length generalization. By setting $d_\mathrm{amb}=30$ and training on numbers with up to $4$ digits, we can generalize to numbers with up to $26$ digits and even $30$ digits for some of the seeds (see Figure \ref{fig:length-gen-addition}).

For both addition methods, we sampled numbers with $n\leq N_\mathrm{train}$ digits with a probability proportional to $n$. Nevertheless, we did not observe much dependency on the distribution. 
Also, note that in the design of scratchpads for both addition and parity, we have used techniques such as inserting random spaces and embedding in a fixed dimension. We note that this is unavoidable for the input as we have used absolute positional embedding for it. Nevertheless, we expect that by using an inductive scratchpad along with relative positional embedding and pre-training, one would be able to achieve the same length generalization with fewer assumptions on the format of the input.

\section{Experiment and implementation details}\label{app:experiment-details}
\subsection{Architecture and datasets}
In all of the experiments, we use GPT2-style \cite{gpt2} decoder-only Transformers and we train them from scratch in an autoregressive manner with the cross-entropy loss and AdamW \cite{adamw} optimizer. Our implementation uses the PyTorch framework \cite{torch} and is mostly built on NanoGPT's implementation \cite{nanoGPT2023}. In particular, our Transformers use causal attention masking and absolute learnable positional embeddings. For most experiments, we use a small model with $6$ layers, $6$ heads, and an embedding dimension of $384$ which results in a model with approximately $10M$ parameters.\footnote{The exact number depends on the task and the vocabulary size of it.} We only change the size of the model in Figure \ref{fig:no-cot-fails} where we use models with 
 $8$ layers, $8$ heads, and an embedding dimension of $512$ (approximately $25M$ parameters), and $12$ layers, $12$ heads, and an embedding dimension of $768$ (roughly $85M$ parameters).

For the cycle task, we use $1000$ node names formatted like \texttt{v123}. For simplicity of analysis, we regard each node name as a single token. Other than that and for the other tasks, we treat each character as a single token.  

For the length generalization experiments, we realized that the performance of the model depends on the random seed of the network. So we repeated each experiment 10 times and reported the median of the results in the plots (along with other runs). For other experiments, we did not observe much variation between seeds and repeated each experiment 5/10 times and reported $95\%$ confidence intervals. We used different Nvidia GPU devices for running our experiments including H100, A100, and RTX4090. We approximate that the runtime for experiments presented in this paper is around 200 hours (excluding hyperparameter search). 

Our code is publicly available at \url{https://github.com/aryol/inductive-scratchpad}.

\subsubsection{Hyperparameter tuning and sensitivity}
In general, we tried different values for the learning rate (with different schedules), batch size, dropout, and weight decay. For different tasks, we have used the hyperparameters that were the most stable and fast. 
The most significant sensitivity is to the batch size. We often find that larger batch sizes help with the training to the point that some tasks cannot be learned with batch sizes smaller than $256$.\footnote{For some experiments, we increased the gradient accumulation steps instead of the batch size to get the same effect with less memory consumption.} 
Also, in the length generalization experiments, we observed that the experiments are rather sensitive to the dropout and weight decay parameters. Generally, strong regularizations can increase the uncertainty of the models. Considering the large number of tokens in the scratchpad of the length generalization experiments and their sensitivity, this uncertainty can increase the probability of error. Of course, a weak regularization can also result in a worse generalization.  In addition, for most of the experiments, we used either fresh samples or a rather large number of samples as our objective is mostly measuring the time complexity or OOD generalization. The exact value of hyperparameters for each task is available in our code. 

\subsection{Implementation of the inductive scratchpad}\label{app:inductive-sp-implement}
Here we describe the implementation of the inductive scratchpad.  Assume that the question and scratchpad sequence are given by \texttt{Q<START>s[1]\#s[2]\#\dots \#s[k]<EOS>}. Note that \texttt{Q} and \texttt{s[i]}s can each contain a number of tokens. Moreover, note that \texttt{Q} can also include a part of the scratchpad besides the question. Anything that comes before \texttt{<START>} acts like a permanent memory and the model can attend to it for the generation of all states. So for example, if there is some shared information between all states, it is more efficient to put it in the scratchpad before \texttt{<START>} rather than including it in all of the states. 
Note that, in general, our goal is to only learn the induction function. In other words, we want to generate tokens of the $i$th state \texttt{s[i]} as if the sequence to this point was only \texttt{Q<START>s[i-1]\#}. We now explain how this can be achieved during training and generation. 

First, we provide two solutions for training time. One simple  way is to break the original sequence \texttt{Q<START>s[1]\#s[2]\#\dots \#s[k]<EOS>} into 
\begin{center}
    \texttt{Q<START>s[1]}, \texttt{Q<START>\#s[1]\#s[2]\#}, \dots, \texttt{Q<START>s[k-1]\#s[k]<EOS>}.
\end{center}
 We also need to make sure that no loss is not computed for \texttt{Q<START>s[i]\#} part of \texttt{Q<START>s[i]\#s[i+1]\#} for $1 \leq i < k$ which can be easily done using a loss mask. This approach ensures that the loss is computed once for the question and each state and also each state is generated from the previous state and the question in an identical manner. Note that all of these operations are done once as a preprocessing step. Now, we describe a second implementation method for the train time. We first duplicate each state other than the last state, i.e., \texttt{Q<START>s[1]\#s[1]\#s[2]\#s[2]\#\dots \#s[k]<EOS>}. Next, we group the consecutive states, reindex the position of the tokens, and design attention masks and loss masks as follows:
\begin{center}
{\footnotesize
\setlength\tabcolsep{4pt}
\begin{tabular}{c|c|c|cc|cc|c|cc}
    Tokens & \texttt{Q<START>} &  \texttt{s[1]\#} &  \texttt{s[1]\#} &  \texttt{s[2]\#}  &  \texttt{s[2]\#} &  \texttt{s[3]\#} & \dots & \texttt{s[k-1]\#} & \texttt{s[k]<EOS>}  \\\hline\hline
    Loss & \multirow{2}{*}{$1$} & \multirow{2}{*}{$1$} & \multirow{2}{*}{$0$} & \multirow{2}{*}{$1$} & \multirow{2}{*}{$0$} & \multirow{2}{*}{$1$} & \multirow{2}{*}{$\cdots$}  & \multirow{2}{*}{$0$} & \multirow{2}{*}{$1$}\\
    mask & & & & &&& \\\hline
    Attention & \multirow{2}{*}{$0$} & \multirow{2}{*}{$1$} & \multicolumn{2}{c|}{\multirow{2}{*}{$2$}} & \multicolumn{2}{c|}{\multirow{2}{*}{$3$}} & \multirow{2}{*}{$\cdots$} & \multicolumn{2}{c}{\multirow{2}{*}{$k$}} \\
    group &&&&&&& \\\hline
    Positional & \multirow{2}{*}{$0,1, \ldots, t-1$} & \multirow{2}{*}{$t, t+1, \ldots$} & \multicolumn{2}{c|}{\multirow{2}{*}{$t, t+1, \ldots$}} & \multicolumn{2}{c|}{\multirow{2}{*}{$t, t+1, \ldots$}} & \multirow{2}{*}{$\cdots$} & \multicolumn{2}{c}{\multirow{2}{*}{$t, t+1, \ldots$}} \\
     indices &&&&&&&& \\
\end{tabular}
}
\end{center}

where we have assumed that $t$ tokens have appeared before the first state \texttt{s[1]}. Note that this number may vary across samples. We can easily create an attention mask based on the attention groups. All groups only attend to tokens in their group and tokens in group $0$ (everything that comes before \texttt{<START>}). Also, using the loss mask vector, we do not compute the loss for the second copy of each state (where the loss mask is equal to zero). Also, for each state generation, we reset the positional indices of the tokens. Using the appropriate loss mask, attention mask, and positional indices explained above guarantees that the loss is computed exactly once for each state, and each state is generated based on the previous state and tokens before \texttt{<START>} (i.e., the question \texttt{Q}) in an identical manner. Note that both approaches that we discussed for the train time can be implemented easily for conventional Transformer models. Further, they do not change Transformer models' behavior on non-inductive samples. Our methods also work with both absolute and relative positional embeddings. Note that the first approach favors Transformers with a small block size (context length) and the second approach is more suitable when the block size is large. So based on the block size, a mixture of these two approaches can be used. In our implementation, we use the second approach as we do not have an issue with the block size. 

Now, we discuss token generation. Assume that we are generating the tokens of $i$th state \texttt{s[i]}, i.e., \texttt{Q<START>s[1]\#s[2]\#\dots s[i-1]\#} have already been generated. To have the inductive behavior, we need the model to generate tokens of \texttt{s[i]} using only the last state and the tokens before \texttt{<START>}. Similar to the training time, this is achievable through using two methods. The first method is to change the input of the model, basically, we can give \texttt{Q<START>s[i-1]\#} as the input to the model when generating tokens for \texttt{s[i]}. Alternatively, we can keep the input intact and just use an attention mask that prevents \texttt{s[i]} tokens from attending to any token other than tokens of  \texttt{Q<START>} and \texttt{s[i-1]\#}. Similar to the training time, one also needs to reindex the position of tokens of \texttt{s[i-1]\#} and \texttt{s[i]\#} so that they appear exactly after \texttt{Q<START>}. Note that it is still possible to do key-value (KV) caching \cite{kvcache-2019-fairseq, 2023keyformer-kvcache} to increase the decoding speed. KV caching stores previous keys and values corresponding to the previous tokens and does not compute them again at the expense of the memory. Generally, for KV caching, we are only in trouble when going from $(i-1)$th state to the $i$th state, because the current keys and values of tokens of \texttt{s[i-1]} are computed based on \texttt{s[i-2]}. However, for the generation of \texttt{s[i]}, we only attend to \texttt{s[i-1]} and do not want \texttt{s[i-1]} to attend to any of the previous states to conserve the inductive behavior. One solution to this problem is to compute the key-value pairs for tokens of \texttt{s[i-1]} again with the correct positional indices and attention masking once we have the transition from \texttt{s[i-1]} to \texttt{s[i]}. Alternatively, one can always cache two versions of keys and values, one for the generation of the current state and one for the generation of the future state. 

We note that in the inductive scratchpad, all the states except the penultimate one are ignored. As a result, the effective number of tokens is significantly reduced compared to the full scratchpad. Consequently, the inductive scratchpad works better for longer scratchpads and models with more limited context sizes and generally scales better. 

Also, note that the inductive scratchpad can generally be used for a wide range of algorithmic tasks. For example, consider an algorithm with a for loop that updates some variables. One can easily put the variables of the algorithm in the state of an inductive scratchpad, and use the Transformer for computing the values at each iteration (and also determining the halting of the loop). The inductive scratchpad for the parity, addition, and cycle tasks all fall into this category.
\subsection{Comparison with relative positional embeddings}\label{app:indutive-relative-PE}
In this section, we discuss whether it is possible to induce an inductive structure for the scratchpad using relative positional embedding \cite{Shaw2018SelfAttentionWR, Dai2019TransformerXLAL} instead of using the explicit inductive scratchpad format introduced in this paper. 
For an inductive structure to work, we want each state to be generated using only the tokens of the previous state and the question in an identical manner for different states.  

More precisely, assume that the question and scratchpad are given by \texttt{Q<START>s[1]\#s[2]\#\dots \#s[k]<EOS>} (one can also decide to remove \texttt{<START>} and \texttt{\#} tokens). For simplicity, assume that the size of all states (i.e., the number of tokens in each state) is equal to $T$ (where we also include \texttt{\#} if it is used). Relative positional embeddings compute the attention between two tokens based on their distance instead of their absolute positions in the sequence. Therefore, the attention pattern between  \texttt{s[i+1]} and \texttt{s[i]} is similar to the attention pattern between \texttt{s[i]} and \texttt{s[i-1]}, and one could hope for an inductive structure to emerge. 

There are a few obstacles, however:
\begin{enumerate}
    \item The distance between the states and question tokens increases as each state is generated. However, in order to have an inductive structure, we want the attention pattern between different states and the question not to change. One could think of using encoder-decoder architectures where the question is in the encoder part and computing the cross-attention between states and the question ignoring the positions of the state tokens in the decoder part. However, even this approach would lose some information. I.e., the attention between the scratchpad tokens and the question tokens cannot use the position of the scratchpad tokens within a state.
    \item Most of the relative positional embeddings \cite{raffel2019exploring,alibi,rotary} allow tokens to attend to all other tokens. Even if one uses an attention mechanism that limits the attention to the $D$ previous tokens, after $L$ transformer layers, tokens of up to distance $DL$ can attend to each other. So in general, it is most likely, that tokens of \texttt{s[i]} can attend to tokens of other states as well as tokens of \texttt{s[i-1]} which hinders the inductive behavior.
    \item Similarly, assume that the $T$th (last) token of \texttt{s[i]} attends to all tokens of \texttt{s[i-1]} including its first token. Note that the distance between the last token of state \texttt{s[i]} and the first token of state \texttt{s[i-1]} is $2T-1$. As a result, the first token of \texttt{s[i]} would also attend to all tokens of \texttt{s[i-2]} except the first one in a similar manner (because the distance between the first token of \texttt{s[i]} and the second token of \texttt{s[i-2]} is $2T-1$) which is undesirable. 
\end{enumerate}

Note that in the analysis above we assumed a fixed number of tokens per state. If the number of tokens in each state varies, issues like (2) and (3) above can become more difficult to handle for a model that only relies on relative positional embedding. To sum up, there is a minor similarity between relative positional embeddings and the idea of induction in the scratchpad. Factors such as pre-training and the recency bias of attention may promote inductive behavior in the model to some extent. Nevertheless, there is no reason to think that the model can implement the inductive behavior relying only on relative positional embeddings for a general inductive task. In general, one can use the inductive scratchpad idea along with relative positional embedding to get better length generalization. Note that inductive scratchpad can generalize on the number of training steps. However, understanding an input with growing length still requires relying on relative positional embeddings. 

As an alternative solution, independent of positional embeddings being absolute or relative, one can use special tokens \texttt{<START>} \texttt{\#} without hard-coding their meaning and hope the model realizes the meaning of these two tokens on its own and implement a soft inductive structure (same attention masking but in a soft manner). However, such an event is also very unlikely for moderate amounts of inductive data. 

\section{Further discussion and specification of Conjecture \ref{gen_conj}}\label{app:conj-specification}

\begin{definition}[Well-behaved distribution for Conjecture \ref{gen_conj}]\label{def:well-behaved}
Input distribution $P_X$ over alphabet $\mathcal{A} ^n$ with $|\mathcal{A}|=O(n^c)$ is well-behaved if for $X\sim P_X$ there is no value that $X$ takes with probability $\Omega(n^{-c_1})$ for any $c_1>0$, and every eigenvalue of $X$'s covariance matrix of the indicator functions for the values of $X$'s entries has absolute value $O(n^{-c_2}\sum_{i=1}^n \mathrm{Var}(X_i))$ for some $c_2>0$. 
\end{definition}
In other words, the first requirement means that there is no value of $X$ that is frequent enough that allows the model to weakly learn the function simply by memorizing the value of that input. The second rules out the possibility of an input distribution with a few important components that are easy to notice and determine the output, such as a scenario where there are $\log(n)$ blocks of $n/(2\log(n))$ bits such that all of the bits in a block are always the same and the output depends only on the values of the blocks. Other than that, many distributions of interest are well-behaved, such as the i.i.d. measure as in \cite{abbe2020poly,arous2021online,AbbeINAL,abbe2023leap,bruna1,bruna2}, or measures with dependencies as for the cycle task considered in this paper.

\textbf{Intuition on Conjecture \ref{gen_conj}.} To see the role of the histogram, $\hat P_X$, note that if one removes positional embeddings from a Transformer, the Transformer would become permutation invariant. In this case, learning functions like full parity (parity of all the bits) becomes very easy as there are only $n+1$ possibilities for $n$ bits in the eyes of a permutation invariant model. One could potentially achieve a similar effect to removing the positional embeddings, if one initializes positional embeddings with a small enough (or vanishing) scale. That is the reason that $\hat P_X$ needs to be included in the definition of the globality degree. Further see Appendix \ref{app:proof-thm1} for more intuition on Conjecture \ref{gen_conj}.

\section{Intuition on the agnostic scratchpad}\label{app:intuition-agnostic-scratchpad}
In a lot of the cases where there is a scratchpad, we assume that during training we know what the Transformer should write in the scratchpad and just need to teach it to do so. However, this tends to be an unrealistic assumption in practice. If we knew how to compute the appropriate scratchpad entries from the inputs then we could just give the Transformer a prefilled scratchpad instead of training it to write the appropriate entries in the scratchpad. Admittedly, we could have a scenario where it is expensive for us to compute the appropriate scratchpad entries and we are hoping to teach the Transformer to do it for us, but there is a limited middle ground between it being cheap enough that we do not need the Transformer to compute the entries and it being too expensive to compute the correct scratchpad entries for all the training data. The other case where we would know what the correct scratchpad entries were during training is if the training data already came with scratchpad entries, which could happen if our training data includes some kind of explanation, which might not often be the case.

So, we could easily have a case where we are trying to train a Transformer with a scratchpad but do not know what it should write in the scratchpad. In that case, it seems like the best available way to evaluate whether an entry the model wrote in the scratchpad is right or not is to judge it based on the end result of writing that entry. In other words, we define a loss function for entries in the scratchpad by taking the scratchpad with that entry, extending it to a full scratchpad, and checking if the transformer ends up giving the right answer. This is the mechanism behind the algorithm proposed in Conjecture \ref{conjeture-agnostic} for optimizing the network based on agnostic scratchpads.

\section{Proof of Theorem \ref{3cycleTheorem}}\label{app:proof-thm1}

In order to prove that a T-regular Transformer cannot learn to distinguish between the case where there are 3 cycles and the case where there is only one, we will take advantage of the fact that the probability distribution of the positional embeddings is invariant under permutations of the embeddings. So, if such a Transformer could learn to solve this problem it would also be able to learn a version where an arbitrary permutation was applied to the inputs. However, we will show that no function has a significant correlation with a random element of the orbit of this function under reordering of its inputs. That in turn will imply that a Transformer trained by gradient descent fails to learn anything meaningful when trained on this function. However, before we do that we will need to consider the setup for the theorem more carefully. The input in this problem is a series of blocks, each of which specifies how $a_{i-1}$, $b_{i-1}$, and $c_{i-1}$ connect to $a_i$, $b_i$ and $c_i$. So, each of these blocks can be viewed as representing a permutation in $S_3$, in which case their product will give the needed information on the overall graph structure. We can effectively reorder these permutations by permuting the tokens in the input. So, in order to show that permuting the tokens can completely alter the function it suffices to show that the function taking the product of a series of permutations is largely uncorrelated to the function taking a product of the permutations in a different order, which can be made rigorous as follows.

\begin{lemma}
Let $P_X$ be the probability distribution on the subset of $S_3^n$ with an even number of odd permutations and $f:S_3^n\rightarrow\{-2,1\}$ be the function such that $f(X)$ is $-2$ if $\prod_{i=1}^n X_i$ is the identity and $1$ otherwise. Next, select $z\in\{0,1\}^{\lfloor (n-1)/2\rfloor}$ uniformly at random. Then, let $\pi=\prod_{i\le \lfloor (n-1)/2\rfloor:z_i=1} (2i-1,2i)$. In other words, $\pi$ is the permutation that switches its $(2i-1)$th and $(2i)$th inputs if $z_i$ is $1$ and leaves them otherwise. Now, let $f'(x)=f(\pi(x))$ for all $x$. Then
\[\mathbb{E}_{z}[\mathbb{E}_{X\sim P_X}^2[f(X)\cdot f'(X) ]]\le 8\cdot 2^{-n/3}\]
\end{lemma}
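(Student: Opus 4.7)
The plan is to reinterpret $f$ via the $2$-dimensional standard irreducible representation $V$ of $S_3$, reduce the correlation $\E_{P_X}[ff']$ to a trace of an ordered product of block matrices (one block per pair of inputs), and use Schur's lemma to collapse each swapped block into a scalar. Under $P_X$ both products $g=\prod_i X_i$ and $h=\prod_i X_{\pi(i)}$ lie in $A_3$ (they share the common sign $Z=\prod_i\mathrm{sgn}(X_i)=1$), so the $S_3$-character identity $\1[g=e]=\tfrac{1}{6}(1+\mathrm{sgn}(g)+2\tr V(g))$ restricts on $A_3$ to $\1[g=e]=\tfrac{1}{3}(1+\tr V(g))$, whence $ff'=\tr V(g)\cdot \tr V(h)$ $P_X$-almost surely. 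Using $\mathbb{P}_{\text{iid}}(Z=1)=\tfrac{1}{2}$, this yields
\[\E_{P_X}[ff'] \;=\; \E_{\text{iid}}\bigl[\tr V(g)\cdot\tr V(h)\bigr] \;+\; \E_{\text{iid}}\bigl[Z\cdot\tr V(g)\cdot\tr V(h)\bigr],\]
where the right-hand expectations are taken over $X_i$ iid uniform on $S_3$.

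Next, $\tr(A)\tr(B)=\tr(A\otimes B)$ together with the exchange rule $(PQ)\otimes(RS)=(P\otimes R)(Q\otimes S)$ splits $V(g)\otimes V(h)$ into an ordered product of $m=\lfloor(n-1)/2\rfloor$ pair-blocks depending only on $(X_{2i-1},X_{2i})$, followed by a tail block on the (at most two) $\pi$-fixed trailing indices. Independence makes the expectation factor across blocks: a $z_i=0$ block contributes $M:=\E[V(X)\otimes V(X)]$, the projection onto the one-dimensional trivial subrepresentation of $V\otimes V$ (so $M^2=M$ and $\tr M=1$), and the tail also contributes $M$.

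The heart of the argument is the $z_i=1$ block, equal to $\E_{X,Y}[V(XY)\otimes V(YX)]$. Reparametrizing $u=XY$ gives $YX=X^{-1}uX$, so two applications of Schur's lemma---first the conjugation average $\tfrac{1}{|G|}\sum_X V(X)^{-1}V(u)V(X)=\tfrac{\chi_V(u)}{\dim V}I$, then the central element $\tfrac{1}{|G|}\sum_u \chi_V(u) V(u)=\tfrac{1}{\dim V}I$---collapse the block to the scalar $I_4/(\dim V)^2=I_4/4$. The $Z$-twisted computation is identical because of the pointwise identity $\chi_V(g)\mathrm{sgn}(g)=\chi_V(g)$ on $S_3$ (equivalently $V\otimes \mathrm{sgn}\cong V$), so each extra $\mathrm{sgn}$ factor is absorbed in the same Schur computation; $z_i=1$ blocks again contribute $I_4/4$, while fixed-pair and tail blocks contribute another rank-one projection $M'$ with $\tr M'=1$.

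Since $I_4/4$ is a scalar that commutes with everything and $M,M'$ are rank-one idempotents, the ordered block products telescope to $M\cdot 4^{-|I|}$ and $M'\cdot 4^{-|I|}$ respectively, giving $\E_{\text{iid}}[\tr V(g)\tr V(h)]=\E_{\text{iid}}[Z\tr V(g)\tr V(h)]=4^{-|I|}$ and hence $\E_{P_X}[ff']=2\cdot 4^{-|I|}$ with $|I|=\sum_i z_i$. Squaring and averaging over $z\in\{0,1\}^m$ uniform, with independence of the coordinates, yields $4\,\E_z[16^{-|I|}]=4(17/32)^m$; since $m\ge (n-2)/2$ and $\log_2(32/17)>2/3$, a direct estimate gives $4(17/32)^m\le 8\cdot 2^{-n/3}$ for all $n\ge 1$. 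The main obstacle is the Schur-lemma collapse of the swapped block to $I_4/4$: once that scalar commutes with the projection $M$, the telescoping and the binomial averaging are routine.
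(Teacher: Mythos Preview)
Your proof is correct and reaches the same exact formula $\E_{P_X}[ff']=2\cdot 4^{-|I|}$ as the paper, but by a genuinely different route. The paper argues elementarily: it realizes $S_3$ as the affine maps $x\mapsto ax+b$ on $\mathbb{F}_3$, checks by hand that if at least one of $\sigma,\sigma'$ is odd then the commutator $\sigma\sigma'\sigma^{-1}\sigma'^{-1}$ is uniform on $A_3$, and concludes that conditioned on the signs of all $X_i$, the two products $\prod X_i$ and $\prod X_{\pi(i)}$ are independent unless every swapped pair consists of two even permutations. Since $\E[f]=0$, only the all-even-on-swapped-pairs event contributes, with probability $4^{-|I|}$ and value $\E[f^2]=2$. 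Your argument instead identifies $ff'$ on $A_3$ with $\tr V(g)\,\tr V(h)$ via the character formula, tensorizes, and uses Schur's lemma twice to collapse each swapped block to the scalar $I_4/(\dim V)^2=I_4/4$; the identity $\chi_V\cdot\mathrm{sgn}=\chi_V$ then makes the $Z$-twisted term identical. The paper's approach is shorter and fully elementary for $S_3$, requiring no representation theory. Your approach is more structural: it explains the factor $4^{-|I|}$ as $(\dim V)^{-2|I|}$, makes transparent why the swapped block is a scalar (so it commutes with the rank-one projections $M,M'$ and the product telescopes), and would generalize verbatim to other groups and irreducibles, whereas the commutator calculation is special to $S_3$.
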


\begin{proof}
We start by considering a single pair of permutations $\sigma,\sigma'$ drawn uniformly and independently from $S_3$ and comparing $\sigma\sigma'$ to $\sigma'\sigma$. At this point, it is helpful to think of $\sigma$ and $\sigma'$ as function on $\mathbb{F}_3$ of the form $ax+b$ with $a\in\{1,-1\}$ and $b\in\{0,1,2\}$. If $\sigma(x)=x+b$ and $\sigma'(x)=x+b'$ then clearly $\sigma\circ\sigma'=\sigma'\circ\sigma$. However, if $\sigma(x)=x+b$ and $\sigma'(x)=-x+b'$ then $\sigma(\sigma'(x))=-x+b'+b$ while $\sigma'(\sigma(x))=-x-b+b'$. The cases where $\sigma(x)=-x+b$ are similar. So, conditioned on any fixed values of the signs of $\sigma$ and $\sigma'$ in which at least one of them is odd, $\sigma\sigma'\sigma^{-1}\sigma^{\prime-1}$ is equally likely to be any even permutation. That means that if this holds and $\sigma''$ is another random permutation of a known sign then the probability distribution of $(\sigma\sigma'\sigma'',\sigma'\sigma\sigma'')$ is the uniform distribution on pairs of permutations of the correct sign. That in turn means that conditioned on the signs of all of the elements of $X$ and the value of $z$, the values of $f(X)$ and $f'(X)$ are independent of each other unless $X_{2i-1}$ and $X_{2i}$ are both even permutations for every $i$ for which $z_i=1$. So, for any fixed value of $z$, $\mathbb{E}_{X\sim P_X}[f(X)\cdot f'(X)]=2\cdot 4^{-|\{i:z_i=1\}|}$. That in turn means that \[\mathbb{E}_{z}[\mathbb{E}_{X\sim P_X}^2[f(X)\cdot f'(X) ]]=4\cdot (17/32)^{\lfloor (n-1)/2\rfloor}\le 8\cdot 2^{-n/3}\]
\end{proof}

\begin{corollary}
Let $P_X$ be the probability distribution on the subset of $S_3^n$ with an even number of odd permutations and $f:S_3^n\rightarrow\{-2,1\}$ be the function such that $f(X)$ is $-2$ if $\prod_{i=1}^n X_i$ is the identity and $1$ otherwise. Next, select $z\in\{0,1\}^{\lfloor (n-1)/2\rfloor}$ uniformly at random. Then, let $\pi=\prod_{i\le \lfloor (n-1)/2\rfloor:z_i=1} (2i-1,2i)$. In other words, $\pi$ is the permutation that switches its $(2i-1)$th and $(2i)$th inputs if $z_i$ is $1$ and leaves them otherwise. Now, let $f'(x)=f(\pi(x))$ for all $x$, and let $g:S_3^n\rightarrow [-1,1]$ be a function. Then
\[\mathbb{E}_{z}[\mathbb{E}_{X\sim P_X}^2[f'(X)\cdot g(X)]]\le 3\cdot 2^{-n/6}\]
\end{corollary}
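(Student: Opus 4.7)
The plan is to deduce this corollary from the preceding lemma by a standard Cauchy--Schwarz/orbit argument that exploits the group structure of the family $\{\pi_z\}$ and the invariance of $P_X$ under coordinate permutations. Writing $f'_z$ for the function $f\circ\pi_z$ and $c(z):=\mathbb{E}_{X\sim P_X}[f'_z(X)g(X)]$, the goal is to upper bound $\mathbb{E}_z[c(z)^2]$ by (the square root of) a quantity that the lemma already controls, namely $\mathbb{E}_z[\mathbb{E}_{X\sim P_X}^2[f(X)f'_z(X)]]$.

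First I would expand
\[\mathbb{E}_z[c(z)^2]=\mathbb{E}_{X,X'\sim P_X}\bigl[g(X)g(X')\,K(X,X')\bigr],\qquad K(X,X'):=\mathbb{E}_z[f'_z(X)f'_z(X')],\]
and apply Cauchy--Schwarz in $(X,X')$ together with $|g|\le 1$ to obtain
\[\mathbb{E}_z[c(z)^2]\le\sqrt{\mathbb{E}_{X,X'}[K(X,X')^2]}=\sqrt{\mathbb{E}_{z,z'}\bigl[(\mathbb{E}_X[f'_z(X)f'_{z'}(X)])^2\bigr]},\]
where $z'$ is an independent copy of $z$. The remaining task is to collapse the inner product $\mathbb{E}_X[f'_z(X)f'_{z'}(X)]$ onto one of the form covered by the lemma.

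Two elementary facts do this. Since each $\pi_z$ is a product of disjoint transpositions $(2i-1,2i)$, we have $\pi_z^{-1}=\pi_z$ and $\pi_{z'}\pi_z=\pi_{z\oplus z'}$; and $P_X$, being uniform on the set of $X\in S_3^n$ whose number of odd coordinates is even, is invariant under every coordinate permutation, hence under every $\pi_z$. The change of variable $X\mapsto \pi_{z'}X$ then gives $\mathbb{E}_X[f'_z(X)f'_{z'}(X)]=\mathbb{E}_X[f(X)f'_{z\oplus z'}(X)]$, and because $z\oplus z'$ is uniform when $z,z'$ are independent uniform, the right-hand side equals $\mathbb{E}_w[\mathbb{E}_X^2[f(X)f'_w(X)]]\le 8\cdot 2^{-n/3}$ by the lemma. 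Taking the square root yields $\sqrt{8}\cdot 2^{-n/6}<3\cdot 2^{-n/6}$, as required.

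The only step requiring genuine care is the orbit identity $\mathbb{E}_X[f'_z f'_{z'}]=\mathbb{E}_X[f\cdot f'_{z\oplus z'}]$: one must verify that $P_X$ really is invariant under each $\pi_z$ (this reduces to the parity constraint being symmetric in the coordinates) and that the $\{\pi_z\}$ form an abelian group isomorphic to $(\mathbb{Z}/2)^{\lfloor(n-1)/2\rfloor}$ with composition matching XOR, so that $z\oplus z'$ is uniform when $z,z'$ are. Everything else is a direct Cauchy--Schwarz computation together with the numerical check $2\sqrt{2}<3$.
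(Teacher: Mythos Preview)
Your proof is correct and follows essentially the same approach as the paper's: both expand $\mathbb{E}_z[c(z)^2]$ into a bilinear form, apply Cauchy--Schwarz to isolate $g$ (the paper phrases this via the tensor inequality $\mathbb{E}_u[(v\cdot u)^2]\le\|v\|_2^2\sqrt{\mathbb{E}_{u,u'}[(u\cdot u')^2]}$, you via $K(X,X')$), then use the invariance of $P_X$ under each $\pi_z$ and the group identity $\pi_z\pi_{z'}=\pi_{z\oplus z'}$ to reduce $\mathbb{E}_{z,z'}[\langle f'_z,f'_{z'}\rangle^2]$ to the single-$z$ quantity bounded by the lemma, finishing with $\sqrt{8\cdot 2^{-n/3}}<3\cdot 2^{-n/6}$.
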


\begin{proof}
First of all, let $(z',\pi',f'')$ be drawn from the same probability distribution as $(z,\pi,f')$ but independently of it. We have
\begin{align*}
\mathbb{E}_{z,z'}[\mathbb{E}_{X\sim P_X}^2[f''(X)\cdot f'(X)]]&=\mathbb{E}_{z,z'}[\mathbb{E}_{X\sim P_X}^2[f(\pi'(X))\cdot f(\pi(X))]]\\
&=\mathbb{E}_{z,z'}[\mathbb{E}_{X\sim P_X}^2[f(x)\cdot f(\pi(\pi'(X)))]]\\
&\le 8\cdot 2^{-n/3}.
\end{align*}
Using the inequality that given a fixed vector $v$ and probability distribution over vectors $P_u$, $\mathbb{E}_{u\sim P_u}[(v\cdot u)^2]\le ||v||_2^2\sqrt{\mathbb{E}_{u,u'\sim P_u}[(u\cdot u')^2]}$, which follows from 
\begin{align*}
&\mathbb{E}_{u\sim P_u}[(v\cdot u)^2]\\
&=\mathbb{E}_{u\sim P_u}[(v\otimes v) \cdot (u\otimes u)]\\
&=(v\otimes v) \cdot \mathbb{E}_{u\sim P_u}[(u\otimes u)]\\
&\le ||v\otimes v||_2 ||\mathbb{E}_{u\sim P_u}[(u\otimes u)]||_2\\
&= ||v||_2^2\sqrt{\mathbb{E}_{u,u'\sim P_u}[(u\cdot u')^2}]
\end{align*}
where $u$ and $u'$ are drawn independently from $P_u$ in the last expectation, 
we have 
\begin{align*}
&\mathbb{E}^2_{z}[\mathbb{E}_{X\sim P_X}^2[f'(X)\cdot g(X)]]\\
&\le \mathbb{E}^2_{X\sim P_X}\left[g^2(X)\right]\mathbb{E}_{z,z'}\left[\mathbb{E}^2_{X\sim P_X}\left[f'(X)\cdot f''(X)\right]\right]\\
&\le 8\cdot 2^{-n/3}.
\end{align*}
\end{proof}

At this point, we are finally ready to prove Theorem \ref{3cycleTheorem} as follows.
\begin{proof}
First, observe that in the setup in Theorem \ref{3cycleTheorem}, an input consists of a series of blocks where the $i$th block specifies where the edges from $a_{i-1}$, $b_{i-1}$, and $c_{i-1}$ go. These edges always go to $a_i$, $b_i$, and $c_i$ in some order (Regarding $a_n$, $b_n$, and $c_n$ as alternate names for $a_0$, $b_0$, and $c_0$). So, each block can be viewed as a permutation in $S_3$. Call these permutations $\sigma_1,...,\sigma_n$. The first $n-1$ of these permutations are mutually independent, but the requirement that there be $1$ or $3$ cycles rather than $2$ forces the last one to take on a value for which their product is even. If there are $3$ cycles their product is the identity and if there is one cycle their product is one of the other even permutations with which it is determined by whether the vertex $n$ edges from $a_0$ is $b_0$ or $c_0$. So, the probability distribution of $\sigma$ is the uniform distribution on the subset of $S_3^n$ with an even number of odd permutations and the label specifies whether or not $\prod_{i=1}^n\sigma_i$ is the identity.

The positional encodings are iid and there is no causal masking, so the probability distribution of the Transformer is symmetric under permutations of the inputs. In particular, for an arbitrary permutation $\pi\in S_n$ we can move all of the letters in block $i$ to the corresponding position in block $\pi(i)$ for each $i$ to essentially apply this permutation to the $\sigma$. Now, let $Y$ be the indicator function for $\prod_{i=1}^n\sigma_i$ being the identity. Then, for every select $z\in\{0,1\}^{\lfloor (n-1)/2\rfloor}$ let $\pi_z=\prod_{i\le \lfloor (n-1)/2\rfloor:z_i=1} (2i-1,2i)$ and $Y_z$ be the indicator function for $\prod_{i=1}^n\sigma_{\pi_z(i)}$ being the identity. By the symmetries of the Transformer, it will have as hard a time learning to compute $Y$ as it would have learned to compute $Y_z$ for any $z$. Next, let $Y_\emptyset$ be $1$ with probability $1/3$ and $0$ with probability $2/3$, independently of $X$. We claim that the probability distribution of the Transformer's weights when it is trained on $Y_z$ will be essentially the same as its weights when it is trained on $Y_\emptyset$. In order to formalize that, let $T_{w}(x)$ be the output the Transformer gives when its edge weights and positional encoding take on the values given by $w$ on an input of $x$. Next, let $P_{W,\emptyset,t}$ be the probability distribution of the weights and positional embeddings after $t$ time steps of training on $Y_\emptyset$ and $P_{W,z,t}$ be the probability distribution of the weights and positional embeddings after $t$ time steps of training on $Y_z$. For each $t$, $P_{W,\emptyset,t}$ is symmetric under permutations of the positional embeddings.

By the assumption on hyperparameters, we are using gradient descent with a clipped gradient with some $B$ polynomial in $n$ such that if there exist $(X,Y)$ for which any entry of the gradient of the loss with respect to the edge weights has absolute value higher than $B$ it is reduced to $\pm B$ when calculating the overall gradient. Let $()_B$ denote this clipping operator. For any given set of edge weights and positional embeddings $w$ and a random $z$, the expected square of the difference between the $i$th elements of the clipped gradient of the loss when the net is trained on $Y_z$ and the clipped gradient of the loss when the net is trained on $Y_\emptyset$ is
\begin{align*}
&\mathbb{E}_z\left[\left(\mathbb{E}_{X}\left[\left(\frac{d L(Y_z,T_w(X))}{d w_i}\right)_B\right]-\mathbb{E}_{X,Y_\emptyset}\left[\left(\frac{d L(Y_\emptyset,T_w(X))}{d w_i}\right)_B\right]\right)^2\right]\\
&=\mathbb{E}_z\left[\mathbb{E}^2_{X}\left[(Y_z-1/3)\cdot\left(\left(\frac{d L(1,T_w(X))}{d w_i}\right)_B-\left(\frac{d L(0,T_w(X))}{d w_i}\right)_B\right) \right]\right]\\
&\le (1/3)^2(2B)^2\cdot 3\cdot 2^{-n/6}=(4B^2/3)\cdot 2^{-n/6} 
\end{align*}
by the previous corollary. There are a polynomial number of weights and a small difference in gradients results in a total variation distance between the probability distributions of the weights one step later than is at most polynomially larger, so for any $t$, we have that
\[\mathbb{E}_z [TV(P_{W,z,t+1},P_{W,\emptyset,t+1})]\le \mathbb{E}_z [TV(P_{W,z,t},P_{W,\emptyset,t})]+poly(n)\cdot 2^{-n/12} \]
That in turn means that
\[TV(P_{W,0,t},P_{W,\emptyset,t})=\mathbb{E}_z [TV(P_{W,z,t},P_{W,\emptyset,t})]=poly(n)\cdot 2^{-n/12}\]
for all $t$ polynomial in $n$. Now, let $y_0$ be chosen to minimize the value of $(1/3)L(1,y_0)+(2/3)L(0,y_0)$. Then, let $L'(p,q)=\min(L(p,q),\max(L(1,y_0),L(0,y_0)))$ for all $p$ and $q$ be a bounded version of the loss function. The previous argument shows that  the expected bounded loss of the Transformer on $(X,Y)$ after $t$ training steps is within $poly(n)\cdot 2^{-n/12}$ of the expected bounded loss of a Transformer trained on $(X,Y_\emptyset)$ for $t$ steps and then tested on $(X,Y)$ which is
\begin{align*}
&\mathbb{E}_{w\sim P_{W,\emptyset,t}}[\mathbb{E}_X[L'(Y,T_w(X)) ]]\\
&=\mathbb{E}_{w\sim P_{W,\emptyset,t}}[\mathbb{E}_z[\mathbb{E}_X[L'(Y,T_w(X)) ]]]\\
&=\mathbb{E}_{w\sim P_{W,\emptyset,t}}[\mathbb{E}_{X,Y_\emptyset}[L'(Y_\emptyset,T_w(X)) ]]+\mathbb{E}_{w\sim P_{W,\emptyset,t}}[\mathbb{E}_z[\mathbb{E}_{X,Y_\emptyset}[L'(Y,T_w(X))-L'(Y_\emptyset,T_w(X)) ]]]\\
&= \mathbb{E}_{w\sim P_{W,\emptyset,t}}[\mathbb{E}_{X,Y_\emptyset}[L'(Y_\emptyset,T_w(X)) ]]\\
&\phantom{xxxxxx}+\mathbb{E}_{w\sim P_{W,\emptyset,t}}[\mathbb{E}_z[\mathbb{E}_X[(Y-1/3)\cdot(L'(1,T_w(X))-L'(0,T_w(X))) ]]]\\
&\ge \mathbb{E}_{w\sim P_{W,\emptyset,t}}[\mathbb{E}_{X,Y_\emptyset}[L'(Y_\emptyset,y_0) ]]\\
&\phantom{xxxxxx}-\mathbb{E}_{w\sim P_{W,\emptyset,t}}\left[\sqrt{\mathbb{E}_z[\mathbb{E}^2_X[(Y-1/3)\cdot(L'(1,T_w(X))-L'(0,T_w(X))) ]]}\right]\\
&=\mathbb{E}_{X,Y_\emptyset}[L(Y_\emptyset,y_0) ]-O(2^{-n/12})
\end{align*}
as desired.
\end{proof}
\begin{remark}
If we used batch gradient descent with a polynomial batch size instead of full gradient descent that would essentially add an inverse polynomial perturbation to the gradient. In the version of this where we are storing weights and embeddings to a limited degree of precision then for a large enough polynomial batch size the expected number of times this perturbation actually causes a parameter to get set to a different value than it would otherwise have is $O(1/n)$. In this case, we could show that with probability $1-n^{-\omega(1)}$ there are at most $\log(n)$ cases of a weight being set differently than it otherwise would have as a result of the limited batch size. There are only $2^{O(\log^2(n))}$ choices of which weights get changed in which steps and how they are changed. For any one of these options the expected loss is still within $poly(n)\cdot 2^{-n/12}$ of what it would be if the net was run entirely on random labels. So, even assuming we end up with the best option the net will still have a loss that is at best $n^{-\omega(1)}$ better than that attained by ignoring the input and always returning $y_0$. If instead of having limited parameter precision we instead have inverse polynomial noise we can still make a variant of this argument but we would need to be more careful about exactly how the net differs when the perturbation to the gradient affects the results.
\end{remark}

\subsection{Extension to agnostic scratchpads}

Theorem \ref{3cycleTheorem} can also be generalized to Transformers trained with agnostic scratchpads in order to get the following.

\begin{theorem}\label{agnostic3cycleTheorem}
Let $G$ be a directed graph which consists of a cycle of length $3n$ with probability $2/3$ and 3 cycles of length $n$ otherwise. Next, if there are $3$ cycles pick one vertex from each and if there is one cycle pick three vertices that are each $n$ edges apart. Then, label uniformly at random these vertices with $a\_0$, $b\_0$, $c\_0$. Next, number every other vertex by the distance from one of these three to it, and for each $i$, label uniformly at random the vertices at distance $i$ by $a\_i$, $b\_i$, and $c\_i$ and store in $X$ the edges between $a\_{i-1},b\_{i-1},c\_{i-1}$ and $a\_i, b\_i,c\_i$; i.e. \[X={\color{red} \bigcirc_{i=0}^{n-1}(} a\_{\color{red} i}>{\color{red} e(a\_i)}\_{\color{red} (i+1)};b\_{\color{red} i}>{\color{red} e(b\_i)}\_{\color{red} (i+1)};c\_{\color{red} i}>{\color{red} e(c\_i)}\_{\color{red} (i+1)} {\color{red} )} a\_0?b\_0?c\_0\]
where $e(v)$ represents the vertex that $v$'s edge points to, all of the instances of $i$ or $i+1$ should have the appropriate value substituted in and the symbols in black should be used exactly as stated. See Figure \ref{fig:3cycles} for an example. Finally, let $Y$ report whether $a\_0, b\_0,c\_0$ are in the same cycle or not. Now, consider training a T-regular neural network with a scratchpad of polynomial length on $(X,Y)$ generated in this manner. For any given $(X,Y)$, we will regard the net's loss on $(X,Y)$ as the expectation over all possible scratchpads that it might generate on $X$ of the loss of its eventual output. If we train it on $(X,Y)$ using population \footnote{This would also be true for batch GD with batches of size $n^c$ with $c$ chosen as a function of the other hyperparameters.} gradient descent with polynomial hyperparameters \footnote{I.e., either polynomial learning rate, polynomial clipping \cite{abbe2020poly, abbe2021power}, and weights stored using a logarithmic number of bits of precision and random rounding: for $a<b<c$ if $b$ needs to be rounded to $a$ or $c$ then it rounds to $c$ with probability $(b-a)/(c-a)$, or with polynomial learning rate, polynomial clipping and polynomial noise added to the gradients.}  and a differentiable loss function then the network fails to weakly learn to compute $Y$.
\end{theorem}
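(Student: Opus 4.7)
The plan is to extend the statistical-query/orbit argument used for Theorem \ref{3cycleTheorem} to accommodate the scratchpad. The input setup is unchanged: the $n$ blocks of $X$ encode permutations $\sigma_1, \ldots, \sigma_n \in S_3^n$ with an even number of odd entries, and $Y$ is the indicator that $\prod_i \sigma_i = e$. The iid positional embeddings and the absence of causal masking on the input tokens mean that the training dynamics remain invariant in distribution under permutations of the blocks, so I can keep using the random reorderings $\pi_z$ and the target functions $Y_z$ defined earlier. In particular, the corollary giving $\mathbb{E}_z[\mathbb{E}^2_{X}[(Y_z - 1/3)\, g(X)]] \le 3 \cdot 2^{-n/6}$ for any bounded $g$ applies unchanged, and will again be the core orthogonality estimate.

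The new ingredient I need to check is that the scratchpad-SGD procedure still produces a gradient whose dependence on $Y$ is an affine function of $Y$ with polynomially bounded coefficients. Per the Conjecture's training rule, the per-sample loss on $(X,Y)$ is an average of losses $L(\cdot, Y)$ over a finite collection of Transformer completions obtained by resampling each scratchpad position; the labels enter only through the final scalar losses. For fixed weights $w$ and fixed $X$, each coordinate of the clipped gradient is therefore of the form $\alpha_w^{(i)}(X) + \beta_w^{(i)}(X)\cdot Y$ with both $\alpha_w^{(i)}$ and $\beta_w^{(i)}$ bounded by $B = \mathrm{poly}(n)$. Consequently, the expected squared discrepancy between the clipped gradient when training on $Y_z$ and when training on $Y_\emptyset$, averaged over $z$, reduces to an expression of the form $\mathbb{E}_z[\mathbb{E}^2_X[(Y_z - 1/3)\, h_w^{(i)}(X)]]$ for a bounded $h_w^{(i)}$, which by the corollary is at most $\mathrm{poly}(n)\cdot 2^{-n/6}$ per weight per step.

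From here the coupling argument from the proof of Theorem \ref{3cycleTheorem} transfers verbatim. Summing the gradient discrepancies over a polynomial number of steps and a polynomial number of weights yields that the total variation between the weight distributions $P_{W,z,t}$ and $P_{W,\emptyset,t}$, averaged over $z$, is $\mathrm{poly}(n)\cdot 2^{-n/12}$; in particular, the distribution of weights after training on $Y$ (obtained by averaging over $z$ and using the permutation symmetry of the Transformer) is within this TV distance of the distribution after training on fully random labels $Y_\emptyset$. Evaluating the bounded loss $L'$ on the true label then shows that the Transformer is within $\mathrm{poly}(n)\cdot 2^{-n/12}$ of the best constant predictor, which precludes weak learning.

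The main obstacle I anticipate is verifying that the gradient factorization in $Y$ genuinely holds for the scratchpad-SGD of the Conjecture. The delicate points are that the scratchpad is generated autoregressively and that each resampling step re-runs the net to completion, so one must track that the dependence on $Y$ enters only through the final loss term at each generation path. The well-behaved softmax assumption keeps the per-input Lipschitz constants polynomial in the weights, so that the clipping at $B = \mathrm{poly}(n)$ is nondegenerate and the affine-in-$Y$ structure is preserved; the rest is bookkeeping to ensure the weight-by-weight bounds accumulate to $\mathrm{poly}(n)\cdot 2^{-n/12}$ over polynomially many steps, exactly as in the no-scratchpad case.
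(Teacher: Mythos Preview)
Your proposal is correct and follows essentially the same approach as the paper: exploit the permutation symmetry of the positional embeddings to apply the orbit/corollary bound, observe that the label $Y$ enters the (expected-over-scratchpad) loss only through the final evaluation so that the clipped gradient is affine in $Y$, and then run the same TV-distance accumulation as in Theorem~\ref{3cycleTheorem}. The paper's proof is in fact even more terse than your write-up: it simply replaces each $\mathbb{E}_X$ by $\mathbb{E}_{X,T_w(X)}$ (where $T_w(X)$ is now the random output over scratchpad draws) and otherwise copies the Theorem~\ref{3cycleTheorem} argument verbatim. One small point: the theorem defines the loss on $(X,Y)$ as the expectation over all scratchpads the net might generate, not the resampling-per-position procedure of the Conjecture you reference; both yield the same affine-in-$Y$ structure, so this does not affect your argument, but you should cite the theorem's loss definition rather than the Conjecture's SGD rule.
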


The proof of this theorem is not meaningfully different from the proof of the previous version, but for completeness we include it below.
\begin{proof}
First, observe that in the setup in the theorem, an input consists of a series of blocks where the $i$th block specifies where the edges from $a_{i-1}$, $b_{i-1}$, and $c_{i-1}$ go. These edges always go to $a_i$, $b_i$, and $c_i$ in some order (Regarding $a_n$, $b_n$, and $c_n$ as alternate names for $a_0$, $b_0$, and $c_0$). So, each block can be viewed as a permutation in $S_3$. Call these permutations $\sigma_1,...,\sigma_n$. The first $n-1$ of these permutations are mutually independent, but the requirement that there be $1$ or $3$ cycles rather than $2$ forces the last one to take on a value for which their product is even. If there are $3$ cycles their product is the identity and if there is one cycle their product is one of the other even permutations with which it is determined by whether the vertex $n$ edges from $a_0$ is $b_0$ or $c_0$. So, the probability distribution of $\sigma$ is the uniform distribution on the subset of $S_3^n$ with an even number of odd permutations and the label specifies whether or not $\prod_{i=1}^n\sigma_i$ is the identity.

The positional encodings are iid and there is no causal masking of the original input, so the probability distribution of the Transformer is symmetric under permutations of the inputs. In particular, for an arbitrary permutation $\pi\in S_n$ we can move all of the letters in block $i$ to the corresponding position in block $\pi(i)$ for each $i$ to essentially apply this permutation to the $\sigma$. Now, let $Y$ be the indicator function for $\prod_{i=1}^n\sigma_i$ being the identity. Then, for every $z\in\{0,1\}^{\lfloor (n-1)/2\rfloor}$ let $\pi_z=\prod_{i\le \lfloor (n-1)/2\rfloor:z_i=1} (2i-1,2i)$ and $Y_z$ be the indicator function for $\prod_{i=1}^n\sigma_{\pi_z(i)}$ being the identity. By the symmetries of the Transformer, it will have as hard a time learning to compute $Y$ as it would have to learn to compute $Y_z$ for any $z$. Next, let $Y_\emptyset$ be $1$ with probability $1/3$ and $0$ with probability $2/3$, independently of $X$. We claim that the probability distribution of the Transformer's weights when it is trained on $Y_z$ will be essentially the same as its weights when it is trained on $Y_\emptyset$. In order to formalize that, let $T_{w}(x)$ be a random output the Transformer with scratchpad gives when its edge weights and positional encoding take on the values given by $w$ on an input of $x$. Next, let $P_{W,\emptyset,t}$ be the probability distribution of the weights and positional embeddings after $t$ time steps of training on $Y_\emptyset$ and $P_{W,z,t}$ be the probability distribution of the weights and positional embeddings after $t$ time steps of training on $Y_z$. For each $t$, $P_{W,\emptyset,t}$ is symmetric under permutations of the positional embeddings.

By the assumption on hyperparameters, we are using gradient descent with a clipped gradient with some $B$ polynomial in $n$ such that if there exist $(X,Y)$ for which any entry of the gradient of the loss with respect to the edge weights has absolute value higher than $B$ it is reduced to $\pm B$ when calculating the overall gradient. Let $()_B$ denote this clipping operator. For any given set of edge weights and positional embeddings $w$ and a random $z$, the expected square of the difference between the $i$th elements of the clipped gradient of the loss when the net is trained on $Y_z$ and the clipped gradient of the loss when the net is trained on $Y_\emptyset$ is
\begin{align*}
&\mathbb{E}_z\left[\left(\mathbb{E}_{X,T_w(X)}\left[\left(\frac{d L(Y_z,T_w(X))}{d w_i}\right)_B\right]-\mathbb{E}_{X,T_w(X),Y_\emptyset}\left[\left(\frac{d L(Y_\emptyset,T_w(X))}{d w_i}\right)_B\right]\right)^2\right]\\
&=\mathbb{E}_z\left[\mathbb{E}^2_{X,T_w(X)}\left[(Y_z-1/3)\cdot\left(\left(\frac{d L(1,T_w(X))}{d w_i}\right)_B-\left(\frac{d L(0,T_w(X))}{d w_i}\right)_B\right) \right]\right]\\
&\le (1/3)^2(2B)^2\cdot 3\cdot 2^{-n/6}=(4B^2/3)\cdot 2^{-n/6} 
\end{align*}
by the previous corollary. There are a polynomial number of weights and a small difference in gradients results in a total variation distance between the probability distributions of the weights one step later than is at most polynomially larger, so for any $t$, we have that
\[\mathbb{E}_z [TV(P_{W,z,t+1},P_{W,\emptyset,t+1})]\le \mathbb{E}_z [TV(P_{W,z,t},P_{W,\emptyset,t})]+poly(n)\cdot 2^{-n/12} \]
That in turn means that
\[TV(P_{W,0,t},P_{W,\emptyset,t})=\mathbb{E}_z [TV(P_{W,z,t},P_{W,\emptyset,t})]=poly(n)\cdot 2^{-n/12}\]
for all $t$ polynomial in $n$. Now, let $y_0$ be chosen to minimize the value of $(1/3)L(1,y_0)+(2/3)L(0,y_0)$. Then, let $L'(p,q)=\min(L(p,q),\max(L(1,y_0),L(0,y_0)))$ for all $p$ and $q$ be a bounded version of the loss function. The previous argument shows that  the expected bounded loss of the Transformer on $(X,Y)$ after $t$ training steps is within $poly(n)\cdot 2^{-n/12}$ of the expected bounded loss of a Transformer trained on $(X,Y_\emptyset)$ for $t$ steps and then tested on $(X,Y)$ which is
\begin{align*}
&\mathbb{E}_{w\sim P_{W,\emptyset,t}}[\mathbb{E}_{X,T_w(X)}[L'(Y,T_w(X)) ]]\\
&=\mathbb{E}_{w\sim P_{W,\emptyset,t}}[\mathbb{E}_z[\mathbb{E}_{X,T_w(X)}[L'(Y,T_w(X)) ]]]\\
&=\mathbb{E}_{w\sim P_{W,\emptyset,t}}[\mathbb{E}_{X,T_w(X),Y_\emptyset}[L'(Y_\emptyset,T_w(X)) ]]\\
&\phantom{xxxxxx}+\mathbb{E}_{w\sim P_{W,\emptyset,t}}[\mathbb{E}_{z,Y_\emptyset}[\mathbb{E}_{X,T_w(X)}[L'(Y,T_w(X))-L'(Y_\emptyset,T_w(X)) ]]]\\
&= \mathbb{E}_{w\sim P_{W,\emptyset,t}}[\mathbb{E}_{X,T_w(X),Y_\emptyset}[L'(Y_\emptyset,T_w(X)) ]]\\
&\phantom{xxxxxx}+\mathbb{E}_{w\sim P_{W,\emptyset,t}}[\mathbb{E}_{z}[\mathbb{E}_X[(Y-1/3)\cdot(L'(1,T_w(X))-L'(0,T_w(X))) ]]]\\
&\ge \mathbb{E}_{w\sim P_{W,\emptyset,t}}[\mathbb{E}_{X,T_w(X),Y_\emptyset}[L'(Y_\emptyset,y_0) ]]\\
&\phantom{xxxxxx}-\mathbb{E}_{w\sim P_{W,\emptyset,t}}\left[\sqrt{\mathbb{E}_z[\mathbb{E}^2_{X,T_w(X)}[(Y-1/3)\cdot(L'(1,T_w(X))-L'(0,T_w(X))) ]]}\right]\\
&=\mathbb{E}_{X,T_w(X),Y_\emptyset}[L(Y_\emptyset,y_0) ]-O(2^{-n/12})
\end{align*}
as desired.
\end{proof}

\section{Comment on Lemma \ref{glob-cycle}}\label{app:comment-on-lemma1}
For $S$ such that $|S|<n$, $X[S]$ is independent of $Y$, since the distribution of such subsets of edges is the same for both classes. 

Let $S$ be such that $|S|=n$.
Let $Z_S$ be the ternary random variable that records whether there is a cycle or an open path on $S$.   Then, 
\begin{align}
 &I(X[S];Y) = I(Z_S;Y) 
\end{align}
since $I(X[S];Y) - I(Z_S;Y) =H(Y|X[S])-H(Y|Z_S)$ and $H(Y|X[S])=H(Y|Z_S)=H(Y|X[S],Z_S)$ since $Z_S$ contains all the information about $X[S]$ that is dependent on $Y$. Hence, 
\begin{align}
 &I(X[S];Y) = I(Z_S;Y)= H(Y)-H(Y|Z_S)\\& = 1 -H(Y|Z_S =1 )P(Z_S=1) -H(Y|Z_S =2 )P(Z_S=2) -H(Y|Z_S=0) P(Z_S=0) \\ &\sim  1-0-
(1-P(Z_S\ne 0)) = P(Z_S\ne 0). 
\end{align}
Now, 
\begin{align}
 P(Z_S\ne 0) &= P(\exists \text{ a cycle on $S$} ) + P(\exists \text{ an open path on $S$} ).
\end{align}
There is a cycle on $S$ if the graph is sampled from the two-cycle distribution and there are only two possible choices of cycles versus $\binom{2n}{n}$ possible selections of  edges, so 
\begin{align}
P(\exists \text{ a cycle on $S$} ) = 2/\binom{2n}{n}.
\end{align}
There is an open path on $S$ if the graph is sampled from the one-cycle distribution and there are $2n$ possible selections of such paths for $\binom{2n}{n}$ possible selections of the edges, so 
\begin{align}
P(\exists \text{ an open path on $S$} ) = 2n/\binom{2n}{n}.
\end{align}
Thus 
\begin{align}
 P(Z_S\ne 0) &= (2+2n)/\binom{2n}{n} \sim \frac{\sqrt{\pi}}{2} (2n)^{3/2} 2^{-2n}.
\end{align}
Therefore, even for sets of size $n$, the mutual information is exponentially low, implying that $\mathrm{glob}(D)$ is greater than $n+1$.

\section{Discussion on circuit complexity connections}\label{app:circuit-complexity-connections}
One approach we can use to analyze what we can learn with different methods is to consider the complexity class of the problems that can be solved by algorithms of a given type. Constant depth neural nets with well-behaved activation functions and weights of size at most polynomial in the input length are limited to computing functions in (possibly nonuniform) $TC^0$, the class of functions computable by polynomial-sized constant depth circuits built from AND, OR, NOT, and threshold gates. Likewise, constant depth Transformers with polynomial-sized weights, polynomial-sized alphabets, and attention matrices whose entries are rational with polynomial-sized numerators and denominators are limited to computing functions in $TC^0$.

The next circuit complexity class above $TC^0$ is $NC^1$, the class of functions computable by polynomial-sized circuits of logarithmic depth that are built from AND gates on pairs of values, OR gates on pairs of values, and NOT gates. It has not been proven that $NC^1\ne TC^0$, but it is suspected to be the case. Among other things, the problem of determining whether a product of permutations in $S_5$ is the identity permutation or some other specified even permutation is $NC^1$-complete, so $TC^0$ circuits cannot compute it unless $TC^0=NC^1$. Furthermore, given any permutations $\sigma_1,\sigma_2,...,\sigma_n\in S_5$ and random $r_0,...,r_n\in S_5$, it is the case that $\sigma_1\cdot \sigma_2\cdot...\cdot\sigma_n=r_0^{-1}(r_0\sigma_1 r_1^{-1})\cdot...\cdot(r_{n-1}\sigma_n r_n^{-1})r_n$ and $(r_0\sigma_1 r_1^{-1}),...,(r_{n-1}\sigma_n r_n^{-1})$ is a random string of permutations. So, computing the product of a string of permutations is essentially as hard in the average case as in the worst case.

Now, one of these products of permutations can be converted to a graph as follows. First, the graph has a set of $5$ vertices representing the $5$ possible inputs to the product, and another set of $5$ vertices representing the $5$ possible outputs of each permutation in the product. Each vertex has an edge to the vertex representing the value the next permutation in the product maps its value to, and there are edges from each of the final $5$ vertices to the corresponding one of the first $5$ vertices. If the product of permutations is the identity, then this graph consists of $5$ cycles of length $n+1$, while otherwise it has a smaller number of cycles. So, determining whether or not the product of permutations is the identity can be reduced to determining which pairs of the first $5$ vertices are in the same component. Thus, if $TC^0\ne NC^1$ then constant-depth neural nets and Transformers cannot determine whether or not two vertices in an arbitrary graph are in the same component with nontrivial accuracy.

On the other hand, with appropriate setup, deep neural nets, recurrent neural nets, and Transformers with scratchpads are Turing complete. Furthermore, they can simulate a Turing machine using resources polynomial in the number of steps the Turing machine runs for and the input length. So, with appropriate parameters these can efficiently solve any problem that it is possible to solve efficiently. A little more precisely, given a neural net where the input bits are $0$ or $1$, it is fairly easy to set a neuron to compute an AND, OR, or NOT of one or more previous values, so any circuit can be converted into a neural net of at most equal size. Any efficient computation can be performed by a polynomial-sized circuit, so it can also be performed by a polynomial-sized deep neural net. Also, given a Turing machine in a state where all entries in its tape that are more than $n$ steps away from the head or heads are in their initial state, there is a circuit of depth $O(1)$ and size $O(n)$ that computes the next state of the Turing machine. That means that running a Turing machine for $T$ steps on an input of length $n$ can be simulated by a recurrent neural net of size $O(T+n)$ and $T$ recurrences. Conversely, given a neural net with a reasonable activation function and subexponential edge weights, one can estimate the output of each neuron to within an exponentially small error in time polynomial in the size of the net.

The topic of the capabilities of a Transformer with a scratchpad is a bit more complicated. The work of \cite{expressivity} analyses the capabilities of a Transformer with a constant-sized alphabet, constant depth, intermediate variables expressible in logarithmic numbers of bits, causal masking, and a form of hard attention where self-attention operations always average over all previous entries that maximize the attention score. It shows that given an input of length $n$ and scratchpad of length $T$ such a Transformer can perform any computation doable in time $T$, and conversely that any computation such a Transformer can perform is doable in $O((T^2+n^2)polylog(T+n))$ time and $O(T+\log(n))$ space. They note that this means that a Transformer with a logarithmic length scratchpad is limited to performing computations in logspace, while a Transformer with a scratchpad of linear length can simulate a finite state machine, and a Transformer with a suitably long polynomial length scratchpad can perform any computation in $P$.

{\bf $TC^0$ versus logspace limitation.} We now tighten the logspace result from \cite{expressivity}.

\begin{lemma}\label{logspace}
    A constant-depth Transformer with intermediate values recorded to inverse-polynomial accuracy, a logarithmic length scratchpad, and a constant alphabet size is still limited to computing functions in $TC^0$. 
\end{lemma}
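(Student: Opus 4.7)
The plan is to exploit the fact that with a logarithmic-length scratchpad and a constant alphabet $\Sigma$, the total number of possible scratchpad values is $|\Sigma|^{s} = 2^{O(\log n)} = \mathrm{poly}(n)$. So rather than trying to simulate the sequential generation of the scratchpad (which would compose $s = O(\log n)$ copies of the underlying one-shot Transformer circuit, giving only a $TC^1$ upper bound), I would instead enumerate all scratchpads in parallel and aggregate their contributions. Recall that by the results quoted from \cite{parallelism} (and extended in the discussion above), a single forward pass of a constant-depth Transformer with constant alphabet and inverse-polynomial precision computes a function in $TC^0$.

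More concretely, for every candidate scratchpad $\sigma = (\sigma_1,\ldots,\sigma_s)\in\Sigma^s$ and every prefix length $t\in[s]$, I build a $TC^0$ circuit $C_{\sigma,t}$ that, on input $X$, runs one forward pass of the underlying Transformer on $X\circ \sigma_1\cdots \sigma_{t-1}$ and reads off the softmax probability assigned to $\sigma_t$. There are $\mathrm{poly}(n)$ such $(\sigma,t)$ pairs, so placing all the $C_{\sigma,t}$ side by side still yields a polynomial-size constant-depth threshold circuit. Then the probability that the Transformer emits scratchpad $\sigma$ is $P_\sigma(X) = \prod_{t=1}^{s} C_{\sigma,t}(X)$, a product of $O(\log n)$ inverse-polynomial rationals; this can be computed in $TC^0$ by iterated multiplication (Beame--Cook--Hoover / Reif--Tate).

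Finally, the output distribution is $\Pr[\text{output} = c \mid X] = \sum_{\sigma\in\Sigma^s} P_\sigma(X)\cdot R_\sigma(X,c)$, where $R_\sigma(X,c)$ is the softmax probability of emitting $c$ after the last scratchpad step on $X\circ\sigma$, again computed by a $TC^0$ circuit in parallel for each $(\sigma,c)$. Summing $\mathrm{poly}(n)$ such terms is in $TC^0$, and since the final output alphabet is constant, taking an argmax (or thresholding for a Boolean output) to extract the computed function is trivial. Thus the function computed by the Transformer with scratchpad lies in (possibly non-uniform) $TC^0$.

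The main obstacle is bookkeeping the precision across the composition: each $C_{\sigma,t}$ outputs a value accurate to $n^{-O(1)}$, but multiplying $O(\log n)$ such values and summing $\mathrm{poly}(n)$ such products could in principle blow up the error. This is handled by increasing the internal precision of each $C_{\sigma,t}$ by a polynomial factor (still $n^{-O(1)}$, hence still inside the $TC^0$ iterated-arithmetic framework) so that the final aggregated probabilities are accurate to inverse-polynomial error, matching the accuracy convention of the lemma. The only other point to check is that the above construction respects the autoregressive/causal structure of the scratchpad, which it does because $C_{\sigma,t}$ depends only on $X$ and on the fixed prefix $\sigma_{<t}$, not on any randomness in earlier steps.
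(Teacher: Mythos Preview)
Your proposal is correct and follows essentially the same approach as the paper's proof: enumerate the $\mathrm{poly}(n)$ many possible scratchpads in parallel, use the \cite{parallelism} result to compute each per-step emission probability in $TC^0$, multiply the $O(\log n)$ factors to get $P_\sigma(X)$, and sum the contributions across scratchpads. Your version is more explicit about the iterated-multiplication and precision bookkeeping, but the underlying argument is the same.
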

\begin{proof}
First, recall that one can compute the probability distribution of the Transformer's output on any given input in $TC^0$ by a result of \cite{parallelism}. So, for any given partial scratchpad one can determine the probability distribution of the Transformer's output for the scratchpad's next entry with a $TC^0$ function. That means that one can find the probability that the Transformer would generate any given scratchpad in $TC^0$ by checking the probability that it outputs each entry when run on the previous scratchpad entries and the original input and then multiplying them to inverse-polynomial accuracy. There are only a polynomial number of possible strings the Transformer could write in its scratchpad, so a $TC^0$ circuit can check them all in parallel in order to determine how likely they each are and then add up the contributions to the probability of each output from each possible scratchpad in order to determine the probability distribution of the Transformer's output.

\end{proof}

{\bf Connection between globality degree and $NC^0$.} We next show that, putting aside the histogram knowledge and using constant alphabet size, having low globality means having correlations with $NC^0$ circuits, a class of circuit weaker than $TC^0$ by constraining the number of fan-in to be constant (and not allowing for threshold gates). 
\begin{lemma}\label{NC0_lemma}
Let $P_X$ be a probability distribution over $\{0,1\}^n$ and $f\colon\{0,1\}^n\rightarrow\{0,1\}$ be a function. $f$ correlates non-trivially with a function computable in $NC^0$ if and only if $\mathrm{glob}(f)=O_n(1)$ when $X\sim P_X$.
\end{lemma}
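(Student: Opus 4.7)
The plan is to prove both directions by reducing to standard facts about mutual information versus covariance for Boolean variables, exploiting the fact that a set of constant size automatically admits an $NC^0$ predictor. I write $Y := f(X)$ throughout and interpret ``non-trivial correlation'' as $|\mathrm{Cov}(h(X),Y)| = n^{-O(1)}$.

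\textbf{Forward direction.} First I would assume $\mathrm{Loc}(f)=k=O_n(1)$, so there is $S\subseteq[n]$ with $|S|=k$ and $I(X[S];Y)\geq n^{-c}$ for some fixed $c$. Since $|S|$ is constant, any Boolean function of $X[S]$ is computable by an $NC^0$ circuit, so it suffices to exhibit a Boolean $g\colon\{0,1\}^k\to\{0,1\}$ with $\mathrm{Cov}(g(X[S]),Y)=n^{-O(1)}$. Writing $p_v:=P(Y=1\mid X[S]=v)$ and $p:=P(Y=1)$, I would take $g$ to be the Bayes-optimal threshold $g(v)=\mathbf{1}[p_v>p]$; a short calculation gives
\begin{equation*}
\mathrm{Cov}(g(X[S]),Y)=\tfrac{1}{2}\sum_v P_{X[S]}(v)\,|p_v-p|.
\end{equation*}
To lower bound this I would invoke the chi-square upper bound $I(X[S];Y)\leq \mathrm{Var}(p_{X[S]})/(p(1-p))$ together with the elementary estimate $\mathrm{Var}(p_{X[S]})\leq \max_v|p_v-p|\cdot\sum_v P_{X[S]}(v)|p_v-p|\leq 2\,\mathrm{Cov}(g,Y)$, valid because $X[S]$ takes only $2^k=O(1)$ values. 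Chaining these yields $\mathrm{Cov}(g,Y)\gtrsim p(1-p)\cdot I(X[S];Y) = n^{-O(1)}$, provided that $p(1-p)=n^{-O(1)}$.

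\textbf{Reverse direction.} Conversely, assume $h\in NC^0$ satisfies $|\mathrm{Cov}(h(X),Y)|\geq n^{-c}$. Each output bit of an $NC^0$ circuit depends on $O(1)$ input bits, so $h(X)=g(X[S])$ for some fixed set $S$ with $|S|=O(1)$. For jointly Boolean variables one checks directly that $d_{TV}(P_{g,Y},P_g\otimes P_Y)=2|\mathrm{Cov}(g,Y)|$, and Pinsker then gives $I(g(X[S]);Y)\geq 2\,d_{TV}^2\geq 8\,\mathrm{Cov}(g,Y)^2 = n^{-O(1)}$. The data-processing inequality then yields $I(X[S];Y)\geq I(g(X[S]);Y) = n^{-O(1)}$, so $\mathrm{Loc}(f)\leq|S|=O(1)$.

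\textbf{The main obstacle} is the degradation of the factor $p(1-p)$ in the forward direction when the marginal $p=P(Y=1)$ is close to $0$ or $1$. I would handle this by noting that $I(X[S];Y)\leq H(Y)=h(p)$, and $h(p)=O(\min(p,1-p)\log(1/\min(p,1-p)))$, so the hypothesis $I\geq n^{-c}$ itself forces $\min(p,1-p)=n^{-O(1)}$ and hence $p(1-p)=n^{-O(1)}$. If this bookkeeping becomes cumbersome, a cleaner fallback is to enumerate the $2^{2^k}=O(1)$ Boolean functions on $k$ bits and argue by contradiction: if every such $g$ had $|\mathrm{Cov}(g(X[S]),Y)|=n^{-\omega(1)}$, then by the computation above the joint distribution on $\{0,1\}^k\times\{0,1\}$ would be $n^{-\omega(1)}$-close in total variation to the product, forcing $I(X[S];Y)=n^{-\omega(1)}$ (using that mutual information on a constant-size support is polynomially controlled by total variation) and contradicting the locality assumption.
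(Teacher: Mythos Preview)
Your proof is correct and follows essentially the same route as the paper: both directions use that an $NC^0$ function depends on a constant set $S$, and for the forward direction both construct the same Bayes-threshold predictor $g(v)=\mathbf{1}[p_v\ge p]$ and lower-bound $\mathrm{Cov}(g,Y)$ by (a version of) $\mathrm{Var}(\E[Y\mid X[S]])$. Your write-up is in fact more careful than the paper's on one point: the paper simply asserts $\E[(g'(X)-p)^2]=n^{-O(1)}$ from $I(X[S];Y)=n^{-O(1)}$, whereas you justify this via the $\chi^2$ bound $I\le \mathrm{Var}(p_{X[S]})/(p(1-p))$ and explicitly handle the case where $p$ is near $0$ or $1$ using $I\le H(Y)$.
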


\begin{proof}
The $NC^0$ functions are exactly the binary functions that only depend on a constant number of input bits. So, for any $NC^0$ function $g$, there exists a set of input bits $S$ such that $|S|=O(1)$ and $g(X)$ only depends on the restriction of $X$ to $S$. That means that if $f$ has a nontrivial correlation with $g$ then $f(X)$ has a nontrivial correlation with $X[S]$ and thus has globality at most $|S|=O(1)$.

Now, assume that $\mathrm{glob}(f)=O_n(1)$ and choose $S$ with $|S|=O(1)$ such that $I(X[S],f(X))=n^{-O(1)}$. Next, define the function $g'\colon\{0,1\}^n\rightarrow[0,1]$ such that $g'(x)=\mathbb{E}[f(X)|X[S]=x[S]]$ for all $x$ and $g\colon\{0,1\}^n\rightarrow\{0,1\}$ such that 
\[g(x)=\begin{cases}
1 &\text{if } g'(x) \ge \mathbb{E}[f(X)]\\
0 &\text{otherwise}
\end{cases}\]
for all $x\in\{0,1\}^n$. This function is in $NC^0$ because $g(x)$ depends only on $x[S]$. Furthermore, the correlation between $f$ and $g$ is
\begin{align*}
&covar(f,g)/\sqrt{var(f) var(g)}\\
&\ge covar(f,g)\\
&=\mathbb{E}_X\left[(g(X)-\mathbb{E}[g(X')])(f(X)-\mathbb{E}[f(X')])\right]\\
&=\mathbb{E}_X\left[(g(X)-\mathbb{E}[f(X')])(f(X)-\mathbb{E}[f(X')])\right]\\
&=\mathbb{E}_X\left[(g(X)-\mathbb{E}[f(X')])(g'(X)-\mathbb{E}[f(X')])\right]\\
&\ge \mathbb{E}_X\left[(g'(X)-\mathbb{E}[f(X')])^2\right]\\
&=n^{-O(1)}
\end{align*}
as desired.
\end{proof}

\section{Experiments with ChatGPT}\label{app:gpt}
\paragraph{Height comparison.}
For $n \geq 1$, we consider $3n+2$ people having different heights. We give the model $3n+1$ pairwise relations between the consecutive people (in order of height) in a random order. Using this information, one can understand the order of the heights for all people by combining the given information. We ask the model about the relation between person $n+1$ and $2n+2$. An example for $n=1$ is 
\begin{center}
    ``Omar is taller than Sara. Vlad is taller than David. Farah is taller than Omar. Sara is taller than Vlad. Is Omar taller than Vlad?"
\end{center}
where the answer is true. Note that to answer this question correctly one has to combine at least $n+1$ relations. Thus, the globality of the task is always larger than $n$. (The exact globality would depend on the tokenization.) We found out that ChatGPT (GPT3.5) fails at this task even for $n=1$ (simplest case). Note that when working with the GPT3.5 model we used the following prompt so that the model is able to use chain-of-thought reasoning: "You can reason if you want but make sure to include yes/no in your answer." Interestingly, GPT4 performs much better than GPT3.5. We also observed that it is often the case that when GPT4 answers correctly to the question, it orders people based on their height, very similar to what we do in the scratchpad of the graph task. Motivated by this, we tested one more setting where we prompted GPT4 with "Answer only with a yes or no." to avoid the chain-of-thought reasoning. In this case, as expected, the model couldn't solve the height comparison task for $n>1$. The results are shown in Figure \ref{fig:chatgpt}.

\begin{figure}[H]
    \centering
    \includegraphics[width=0.6\textwidth]{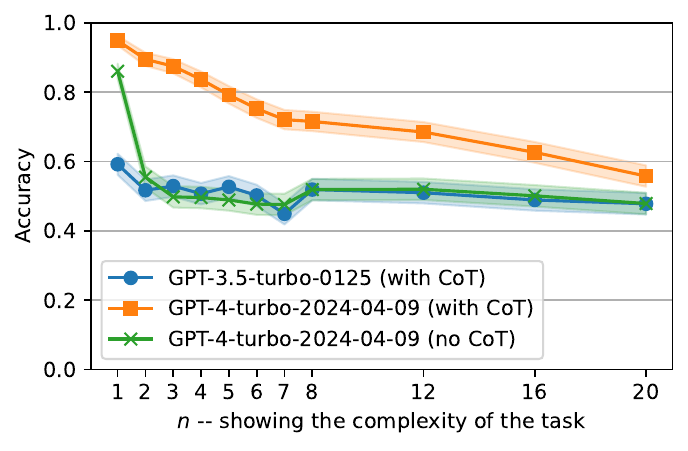}
    \caption{For complexity $n$ we have $3n+2$ people and there are $n$ people between the two names we query (see example above). We found out that ChatGPT(3.5) can hardly go beyond the random baseline on this task even for $n=1$ while GPT4 performs much better. However, if GPT4 does not use CoT reasoning, its performance would be near random for $n > 1$. Note that we used 1000 examples for each value of $n$.}
    \label{fig:chatgpt}
\end{figure}

\end{document}